\documentclass{article}
\usepackage{iclr2027_conference,times}

\usepackage{amsmath}
\usepackage{amssymb}
\usepackage{amsthm}
\usepackage{graphicx}
\usepackage{booktabs}
\usepackage{microtype}
\usepackage[font=small,labelfont=bf,skip=4pt]{caption}
\usepackage{hyperref}
\usepackage{url}

\hypersetup{colorlinks=true, linkcolor=black, citecolor=black, urlcolor=black,
            filecolor=black, breaklinks=true}

\newcommand{\eps}{\varepsilon}
\newcommand{\epslo}{\eps_{\mathrm{lo}}}
\newcommand{\epsup}{\eps_{\mathrm{up}}}
\newcommand{\ci}[2]{{\scriptsize$[#1,\,#2]$}}
\graphicspath{{figs/pdf/}}

\newtheorem{theorem}{Theorem}
\newtheorem{proposition}[theorem]{Proposition}
\newtheorem{corollary}[theorem]{Corollary}
\newtheorem{lemma}[theorem]{Lemma}
\theoremstyle{remark}

\title{When Do Options Help? Policy Necrosis and\\Redundant Coverage in Option-Critic}

\author{%
Bingyun Liu\thanks{Equal contribution.\ $^\dagger$Corresponding author.} \\
Institute of Automation\\
Chinese Academy of Sciences\\
\texttt{liubingyun2021@ia.ac.cn} \\
\And
Yuheng Jing$^{*\dagger}$ \\
Institute of Automation\\
Chinese Academy of Sciences\\
\texttt{jingyuheng2022@ia.ac.cn}
}

\iclrfinalcopy

\begin{document}

\maketitle
\lhead{}
\renewcommand{\headrulewidth}{0pt}

\begin{abstract}
Option-critic learns options: sub-policies together with a learned rule for when
each one hands control back. Its headline result is that performance improves as
options are added. We explain that result, with theory and experiment. First, the
termination rule option-critic learns by maximising return contributes nothing.
When the termination test and the policy that picks options read the same values,
the test fires at every step, so the learned rule is identical to always
terminating. When that policy explores and the test does not, as in option-critic
itself, the rule can block the exploration; there are instances where it suffers
$\Omega(T)$ regret while always terminating holds to $O(\log T)$. Forcing
termination at every step leaves the option-count curve intact. Second, the
policy inside an option barely explores at all, so a state locks onto the first
action that looked good and never updates again. We name this policy necrosis,
give a state-level test for it, and find three fifths of states necrotic in a
typical option. Restoring exploration repairs those states, and one option then
solves the task. Third, extra options improve no option; what falls is the chance
that all of them fail in the same state, from $59\%$ to $4\%$, and performance
follows that joint quantity.
\end{abstract}

\section{Introduction}

The options framework gives reinforcement learning agents temporally extended
actions: an option is an intra-option policy, an initiation set, and a
termination condition \citep{sutton1999between,precup2000temporal}. Option-critic
made the framework practical by differentiating the discounted return with
respect to both the intra-option policy parameters and the termination
parameters, so that the control objective itself shapes the options instead of a
designer handing them to the agent \citep{bacon2017option}. Its tabular
demonstration is FourRooms, and the demonstration is a scaling result: with more
options the agent reaches the goal faster, and the effect grows with the
difficulty of the dynamics. That result has been reproduced, extended to
continuous control, and built upon many times over
\citep{klissarov2017ppoc,harb2018waiting,riemer2018abstract,zhang2019dac,khetarpal2020interest}.

What the scaling result licenses is less clear than it looks. Three things move
together when the number of options $K$ grows. Options are temporally extended,
so a larger option set changes the temporal structure of behaviour. Options are
separate function approximators, so a larger option set changes the capacity and
the optimisation geometry of the lower level. And options are chosen by a learned
router, so a larger option set changes how much of the decision burden sits above
the primitive actions. A curve of performance against $K$ cannot separate them,
and reading it as evidence for temporal abstraction assumes the other two do
nothing.

This paper separates them, and the answer is not temporal abstraction. We
manipulate one factor at a time --- the termination rule, the lower-level
exploration rate, and $K$ --- across several transition kernels, two reward
definitions, and a tabular and a neural implementation, and we read performance
off frozen checkpoints under a protocol that gives every condition the same
zero-exploration test policy. Figure~\ref{fig:framework} lays out the
environment, the agent and the hypothesis; three results follow, and together
they form a single account of what option-critic is doing.

\textbf{The termination function that the termination gradient learns is not a
source of performance.} Our target is specific: the termination rule that
option-critic obtains by maximising return, whose fixed point terminates exactly
when continuing is worth less than handing control back. We prove that this rule
is either vacuous or harmful. When the termination test and the option selector
read the same values, the test fires at every state, so the learned rule is
pathwise identical to $\beta\equiv1$ and adds nothing. When the selector explores
while the critic feeding the test does not, the rule can veto the exploration the
selector asks for; we exhibit instances on which the return-maximising rule
suffers $\Omega(T)$ regret, while $\beta\equiv1$ holds to $O(\log T)$ on every
instance of the class. The experiments match the theory. Forcing $\beta\equiv1$, which
destroys temporal extension outright, leaves the whole dependence on $K$
untouched, and the measured effect of learned termination reverses sign as $K$
grows.

\textbf{Option-critic's lower level suffers a state-local learning failure, and
we can detect it.} The original algorithm runs its intra-option policies as a
softmax at temperature $10^{-3}$ with no lower-level exploration, and with
learning rate $0.25$ that multiplies every advantage by $250$ before it reaches
the logits. A state commits to whichever action was first sampled with a positive
advantage; the score function then vanishes, and the state never updates again.
We call this \emph{policy necrosis}, and we give a state-level test for it that
needs only the set of actions an optimal policy would accept. Roughly half of all
states are necrotic in a typical option. Restarting lower-level exploration from
a shared checkpoint repairs them: a single option goes from $306.7$ to $55.4$
expected steps under identical zero-exploration evaluation, and improves on all
$350$ seeds. Two different exploration mechanisms reproduce the repair in the
neural implementation.

\textbf{What extra options provide is redundant coverage.} Adding options repairs no
individual option --- the mean single-option necrosis rate holds near $60\%$ at
every $K$ --- but the chance that \emph{every} option is necrotic in the same
state falls from $59\%$ to $4\%$, and performance follows that joint quantity and
not the individual one. Necrotic sets overlap more than independence predicts and
far less than a common cause would, so a pool of $K$ options is worth about $K/2$
independent attempts at covering the state space. Once joint coverage exists the
residual error moves upstairs, to a router that picks a necrotic option while a
viable one is available. And once exploration has repaired the lower level,
further options stop returning anything and start costing.

The three results compose. Necrosis is the failure, option multiplicity is a
workaround that hides it, lower-level exploration is a repair that removes it,
and the termination gradient plays no part. Beyond the specific verdict on
option-critic, the paper contributes a diagnostic that is cheap wherever an
acceptable-action set is available, a theoretical account of why a
return-maximising termination rule cannot be a source of gains, and an evaluation
protocol --- freeze, disable exploration everywhere, solve exactly --- under
which claims about what a hierarchical agent has learned can be settled instead
of argued.

\begin{figure}[t]
\centering
\includegraphics[width=\textwidth]{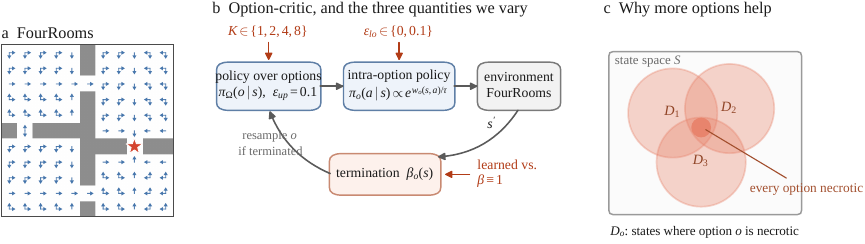}
\caption{\textbf{The benchmark, the agent, and the hypothesis.}
(a) FourRooms with the goal in the east doorway (star) and, for every non-goal
cell, the complete set of actions that reduce the wall-respecting distance to the
goal; $65$ of the $103$ cells admit two. (b) The option-critic loop and the three quantities we manipulate; everything else stays at the values of the original work. (c) An option is necrotic in a state when its policy has
committed there to an action no optimal policy would take, so option multiplicity
acts on $\bigcap_o D_o$ and not on any individual $D_o$.}
\label{fig:framework}
\end{figure}

\section{Setup}
\label{sec:setup}

\paragraph{Option-critic.}
An option $o$ consists of an intra-option policy $\pi_o(a\,|\,s)$ and a
termination function $\beta_o(s)$; a policy over options $\pi_\Omega(o\,|\,s)$
picks the next option whenever the current one terminates
\citep{sutton1999between}. Option-critic learns all three from the return
\citep{bacon2017option}. In the tabular implementation we study, $\pi_o$ is a
softmax over preference weights, $\pi_o(a\,|\,s)\propto\exp(w_o(s,a)/\tau)$, and
receives the update
\begin{equation}
\label{eq:intra}
w_o(s,\cdot)\;\leftarrow\;w_o(s,\cdot)+\alpha\,\big(Q_U(s,o,a)-Q_\Omega(s,o)\big)\,
\big(e_a-\pi_o(\cdot\,|\,s)\big),
\end{equation}
where $Q_U(s,o,a)$ is the critic's value for taking primitive action $a$ under
option $o$ at $s$ and $Q_\Omega(s,o)$ its value for running option $o$ from $s$,
so their difference is the intra-option advantage, and $e_a$ is the one-hot vector
on the executed action, making $e_a-\pi_o(\cdot\,|\,s)$ the softmax score
function. The termination function $\beta_o$ is a sigmoid updated along the
option advantage $A_\Omega(s,o)=Q_\Omega(s,o)-\max_{o'}Q_\Omega(s,o')$, and
$\pi_\Omega$ is $\epsup$-greedy on the tabular $Q_\Omega$. The original hyperparameters are
$\tau=10^{-3}$, $\alpha=0.25$ for both the intra-option and termination updates,
$0.5$ for the critic, $\epsup=0.1$, and discount $0.99$. Every reported
configuration keeps them.

\paragraph{Environments.}
FourRooms is the benchmark option-critic was introduced on, and it is also an
unusually good instrument. Its state space is finite and small enough to
enumerate, so for any transition kernel we can solve the MDP exactly, read off
the set of actions an optimal policy would accept in every state, and turn that
set into a per-state test of whether an option is usable there; nothing in
Section~\ref{sec:diagnostics} is available without it. The grid is $13\times13$
with $104$ free cells and the goal in the east doorway, leaving $103$ non-goal
states. Under \emph{local slip} the agent moves to a uniformly random
neighbouring free cell with probability $p$; under \emph{global teleport} it
jumps to a uniformly random free cell with probability $p$ and slips locally with
probability $(1-p)p$, with $p\in\{0,\tfrac13,\tfrac12\}$. Two rewards are used:
the original \emph{sparse} reward of $1$ on entering the goal, and a
\emph{constant cost} of $-1$ on every transition, which makes the optimal policy
the shortest path and the acceptable action set purely combinatorial. Episodes
are capped at $1000$ steps and training runs for $1000$ episodes.

Two continuous variants test whether the mechanism survives function
approximation, where no state can be enumerated and the diagnostics do not apply.
In \emph{Direct} the actions are unit displacements with uniform noise and the
observation is the position. \emph{Inertia} changes both the state and the action
spaces: actions are accelerations, the observation carries a velocity, and a
collision zeroes it, so the agent controls a second-order system on the same
floor plan. All four option-critic components become two-layer tanh networks
trained with Adam, target networks and a replay buffer;
Appendix~\ref{app:setup} lists every hyperparameter.

\paragraph{Evaluation.}
Training curves conflate what a policy has learned with how much it randomises
while acting, the exact confound at stake here, so we report performance from
frozen checkpoints. Parameters are copied out and never updated; explicit
lower-level exploration is set to zero in every condition, so a branch trained
with $\epslo=0.1$ is evaluated with $\epslo=0$; $\epsup=0.1$ is retained and
marginalised exactly; and the expected number of capped transitions to the goal
is computed by finite-horizon survival dynamic programming on the true kernel
with horizon $1000$, starting uniformly over the $103$ non-goal states. This
evaluation has no sampling error, so all reported uncertainty comes from the
$350$ matched training seeds, and every contrast is paired within seed. The
neural experiments and the tabular termination grid instead use the mean number
of steps over the last $100$ training episodes, the metric the original scaling
result was reported on, and every figure and table that uses it says so.
Appendix~\ref{app:stats} gives the resampling scheme behind every interval.

\section{Detecting policy necrosis}
\label{sec:diagnostics}

The measurements below need a notion of what it means for one option to be usable
in one state, and a way to attribute a failure either to the options or to the
router that picks among them.

\paragraph{Necrotic options.}
Fix a frozen checkpoint and a state $s$, and let $A^\star(s)$ be the set of
primitive actions an optimal policy for the true kernel would be willing to take
at $s$. Option $o$ is \emph{necrotic} at $s$ when its intra-option policy has
committed there to something outside that set. Under the constant cost this needs
no threshold: $A^\star(s)$ is exactly the set of actions that reduce the
wall-respecting distance to the goal, all ties kept ($65$ of $103$ states have
two), and $o$ is viable at $s$ when $\arg\max_a w_o(s,a)\in A^\star(s)$. Under the
sparse reward we take $A^\star(s)$ from value iteration on the exact kernel,
keeping all numerically tied maximisers, and call $o$ necrotic when
$\sum_{a\in A^\star(s)}\pi_o(a\,|\,s)<0.01$; the deterministic case agrees state
by state with a breadth-first search. Both criteria weight all states equally and
are evaluated with exploration switched off.

\paragraph{Where the failure sits.}
With $K$ options, every (state, seed) pair falls in exactly one of three classes.
It is \textbf{D} when all $K$ options are necrotic, so no choice the router could
make would help; \textbf{M} when at least one option is viable but the greedy
$\arg\max_o Q_\Omega(s,o)$ selects a necrotic one, so usable coverage exists and
is missed; and \textbf{R} when the router selects a viable option. Multiplicity
can act on $\Pr(\mathrm{D})=\Pr(\bigcap_o D_o)$, while the mean single-option
necrosis rate $\tfrac1K\sum_o\Pr(D_o)$ says whether individual options improved.
Separating them is the point: if extra options make each option better, both fall
together; if extra options supply redundancy, only the first falls.

\section{The learned termination function is not a source of performance}
\label{sec:termination}

\subsection{What the termination gradient is heading for}
\label{sec:theory}

Option-critic updates $\beta_o$ along the option advantage
$A_\Omega(s,o)=Q_\Omega(s,o)-\max_{o'}Q_\Omega(s,o')$, which is never positive, so
the update only ever raises $\beta_o(s)$, wherever the current option is not the
greedy one. The rule it heads for is greedy interruption,
\begin{equation}
\label{eq:hard}
\beta(s,o)=\begin{cases}0,&Q_\Omega(s,o)>V_\Omega(s),\\ 1,&\text{otherwise,}\end{cases}
\end{equation}
and the collapse of learned $\beta$ towards $1$ that the literature reports
\citep{harb2018waiting,harutyunyan2019termination} is that drift. We analyse
\eqref{eq:hard} in a two-level setting where an upper level picks an option and a
lower level picks an action inside it, both driven by value estimates.
Appendix~\ref{app:theory} gives the model, the algorithms and the proofs.

\begin{proposition}[Vacuity under value consistency]
\label{prop:vacuous}
If the selector and the termination test read the same estimates $Q_t$ and the
state value obeys $V_t(s)=\max_oQ_t(s,o)$, then $Q_t(s,C)\le V_t(s)$ for every
current option $C$, so \eqref{eq:hard} terminates at every step and produces
trajectories and regret identical to $\beta\equiv1$.
\end{proposition}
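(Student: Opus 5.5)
The argument is a one-line inequality followed by a coupling, so I will split it in two. First, the pointwise claim: since $V_t(s)=\max_o Q_t(s,o)$ and the current option $C$ is one of the options being maximised over, $Q_t(s,C)\le\max_o Q_t(s,o)=V_t(s)$. The strict inequality $Q_t(s,C)>V_t(s)$ in the first branch of \eqref{eq:hard} therefore never holds, and the rule returns $\beta(s,C)=1$ at every state, for every current option, at every time $t$. Nothing here depends on how $Q_t$ was learned or on whether $C$ was chosen greedily; value consistency and the fact that the test and the selector read the same $Q_t$ are the only ingredients.

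Second, I lift this to trajectories and regret by a coupling. I will run the two agents, one with \eqref{eq:hard} and one with $\beta\equiv1$, on a common probability space: shared environment noise, shared selector randomisation (the $\epsup$ draws and tie-breaking), and shared lower-level randomisation. The induction hypothesis is that at the start of step $t$ both agents have the same history, hence the same estimates $Q_t$ and the same current option. By the first part, both terminate at step $t$. Both then call the same selector on the same $Q_t$ with the same random draws, so they pick the same option. They therefore take the same action and receive the same transition and reward, and the deterministic update maps the common history to the same $Q_{t+1}$. Induction gives pathwise identical trajectories. Regret is a function of the trajectory, so the regrets are equal almost surely, and in particular equal in expectation.

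The step I expect to need care is the modelling assumption hidden in ``terminates at every step''. The argument requires that the termination test at step $t$ uses the same $Q_t$ the selector will use after termination, with no staleness. If the appendix model updates $Q$ between the termination check and the selection, I would restate the induction with the estimate indexed at the check, so that both read the same table. A second point is that the selector's randomness must be consumed identically in both runs. This holds because both terminate at every step, so the selector is invoked at exactly the same times. Ties at $\max_o$ are harmless, since the inequality $Q_t(s,C)\le V_t(s)$ is weak and the test in \eqref{eq:hard} requires strict excess.
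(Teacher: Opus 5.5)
Your proposal is correct and follows the paper's proof: the weak inequality $Q_t(s,C)\le\max_oQ_t(s,o)=V_t(s)$ forces $\beta_t=1$, and an induction shows that identical histories give identical counts, estimates, proposals and executed actions, so the two agents have the same action sequences and regret on every sample path. The paper takes $Q_t$ to be the UCB option scores $B_o(t)$ with a deterministic tie-breaking selector, so your shared-randomness coupling is only needed for the randomised-selector reading you also allow for.
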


\begin{theorem}[Worst-case separation under the standard definitions]
\label{thm:separation}
If instead the selector uses optimistic indices while the test reads the critic's
raw values, which is option-critic's own definitions, then for two-level UCB there
is an instance with two options and two actions each on which, writing
$\mathrm{gi}$ for rule~\eqref{eq:hard}, for every $T\ge4$,
\[
\mathcal R_T^{\mathrm{gi}}\;\ge\;\frac{T-2}{64},
\qquad
\mathcal R_T^{\beta\equiv1}\;\le\;64\ln T+\tfrac12\bigl(1+\tfrac{\pi^2}{3}\bigr).
\]
Over the class the worst case is $\Theta(T)$ for $\mathrm{gi}$ against
$O(\sqrt{MT\ln T})$ for $\beta\equiv1$, with $M$ the number of option--action
pairs, and a two-step tabular MDP that reselects the option every episode still
forces $\Omega(T)$.
\end{theorem}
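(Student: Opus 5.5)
We will work in the two-level bandit model of Appendix~\ref{app:theory}. An episode ends after one primitive pull; the upper level holds a current option $C$. Rule~\eqref{eq:hard} keeps $C$ whenever the raw critic value $\hat Q_t(s,C)$ exceeds $\hat V_t(s)=\max_o\hat Q_t(s,o)$ computed from the same raw values. The selector's choice, by contrast, uses UCB indices $\hat Q_t(o)+\sqrt{2\ln t/N_t(o)}$. The separation comes from this mismatch. Once the current option is the strict raw-greedy one, the test never fires, so the selector's optimism is never consulted. Exploration of the other option is therefore vetoed for as long as the raw estimates keep their order.

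\textbf{Lower bound.} We take two options, each with two Bernoulli actions, and fix the parameters so that one event has probability at least $1/8$ within the first two steps. On that event the suboptimal option $o_2$ is current and strictly raw-greedy, while the optimal option $o_1$ has a raw estimate pinned low. Concretely, $o_1$'s first sampled action returns reward $0$, and $o_1$'s mean exceeds $o_2$'s by a gap of $\Delta=1/4$. Option $o_2$ has a deterministic reward strictly between zero and $o_2$'s mean, for example a constant, so its raw estimate stays bounded away from $0$. On this event $o_1$ is never reselected: its estimate is frozen at $0$, strictly below $o_2$'s, so the test $Q(s,o_2)>V(s)$ keeps holding. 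Every remaining step then incurs gap at least $\Delta$ up to a constant factor. Multiplying the event probability, the gap, and the $T-2$ remaining steps, and tuning the constants, gives $(T-2)/64$.

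The delicate part is the following. $\hat V$ includes $\hat Q(o_2)$ itself, so the strict inequality requires $\hat Q(o_2)>\hat Q(o_1)$. With ties handled as in~\eqref{eq:hard} ("otherwise" terminates), we need strictness on the event. We get it by making $o_2$'s rewards deterministic and positive. We also check that the lower-level UCB inside $o_2$ cannot drive $\hat Q(o_2)$ to $0$; this holds if both of $o_2$'s actions have positive deterministic rewards. I expect this event construction, and its verification against the exact UCB tie-breaking conventions, to be the main obstacle.

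\textbf{Upper bound.} Under $\beta\equiv1$ the selector reselects every step, so the process is nested UCB. We decompose regret into an upper term and a lower term. The upper term is the expected pulls of the suboptimal option times its gap, bounded by the standard Auer--Cesa-Bianchi--Fischer argument, $8\ln T/\Delta^2+1+\pi^2/3$ per arm. The lower term is the suboptimal-action pulls inside the optimal option, handled the same way. We must argue that the upper index remains a valid upper confidence bound when the lower level is adaptive. We do this by bounding the optimal option's empirical mean below by its best action's mean minus the lower-level regret divided by $N$, which is $O(\ln N/N)$ and is absorbed into the confidence width. With the gaps of the instance ($\Delta=1/4$ and $1/2$), the constants come out to $64\ln T+\tfrac12(1+\pi^2/3)$.

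\textbf{Class statements.} The $\Theta(T)$ claim for $\mathrm{gi}$ follows from the instance above together with the trivial bound $\mathcal R_T\le T$. The $O(\sqrt{MT\ln T})$ bound for $\beta\equiv1$ follows from the gap-free version of the UCB analysis: we split pairs by whether the gap is below $\sqrt{M\ln T/T}$ and apply Cauchy--Schwarz over the $M$ option--action pairs. For the two-step MDP, we prepend a start state where the option is selected each episode and make the second step carry the bandit. Within an episode the termination test at step two reproduces the veto, because the continuing option is raw-greedy at the second state. So the same frozen-estimate event recurs every episode and gives $\Omega(T)$.
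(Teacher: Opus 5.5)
Your lower bound fails because of how you define the value the termination test compares against. With $\hat V_t(s)=\max_o\hat Q_t(s,o)$ taken over raw values, $\hat Q_t(s,C)\le\hat V_t(s)$ holds for every current option $C$. Rule~\eqref{eq:hard}, under which ties terminate, therefore fires at every step, and $\mathrm{gi}$ coincides pathwise with $\beta\equiv1$. This is the argument of Proposition~\ref{prop:vacuous}, and it holds whatever the selector uses. You can see the break in your remark that strictness ``requires $\hat Q(o_2)>\hat Q(o_1)$'': nothing makes $\hat Q(o_2)$ exceed a maximum that contains it. Your claim that the test ``never fires'' when $C$ is strictly raw-greedy is backwards, since then $\hat Q(s,C)=\hat V(s)$ and the option terminates. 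On your event the index of $o_1$ grows like $\sqrt{2\ln t}$, so the selector eventually proposes $o_1$, executes it, and corrects its estimate.

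The missing idea is that option-critic's $V_\Omega$ is the upper policy's expectation of raw values. With the UCB selector as the upper policy, that is the raw value of the \emph{proposed} option: $V_t(s)=\widehat q_{J_t}(t)$ with $\widehat q_o(t)=\widehat\mu_{o,a_o^+(t)}(t-1)$. The veto is then the joint event $B_{J_t}(t)>B_{C_t}(t)$ and $\widehat q_{J_t}(t)<\widehat q_{C_t}(t)$.

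With that definition the paper proceeds as follows. It takes both actions of $G$ to be $\operatorname{Ber}(3/4)$ and both actions of $B$ to return exactly $1/2$. It initialises in the order $(G,1),(G,2),(B,1),(B,2)$, so $B$ is current afterwards. It lets $E$ be the event that both draws from $G$ are $0$, which has probability $1/16$. You need both draws to be $0$: after one pull per leaf, $\widehat q_G$ reads the lower-level proposal, which is the leaf with the larger sample mean, so a $0$ beside a $1$ gives $\widehat q_G=1$ and no veto. On $E$, a proposal $J_t=B$ ties and terminates back into $B$. A proposal $J_t=G$ gives $1/2>0=V_t(s)$ and continues $B$. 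Hence $B$ runs on the $T-2$ steps from step three on, and $\mathcal R_T^{\mathrm{gi}}\ge\frac1{16}\cdot\frac14(T-2)$. Your sketch leaves the initialisation order, the $J_t=C_t$ case and the event unspecified, and your event (one zero from $o_1$, probability $1/8$) is not sufficient.

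The remaining parts have smaller but real problems.

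\textbf{Upper bound.} You analyse a nested UCB with an option-level index and an adaptive lower level. You dispose of the resulting bias by saying it is ``absorbed into the confidence width''. That step is asserted rather than proved, and it is exactly the known difficulty with nested UCB. The paper sidesteps it by defining the option score as $B_o(t)=\max_a b_{o,a}(t)$. Under $\beta\equiv1$ nested maximisation is then joint maximisation, and the agent is literally UCB1 on the $M$ leaves (Theorem~\ref{thm:app-equiv}). The standard UCB1 bound then gives $2\bigl(8\ln T/(1/4)+C_0/4\bigr)=64\ln T+C_0/2$ with $C_0=1+\pi^2/3$, from the two suboptimal leaves of gap $1/4$. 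The $O(\sqrt{MT\ln T})$ class bound follows by the usual gap split. The constant you quote comes from this leaf-level count, not from your option-level decomposition. The gap $1/2$ you mention belongs to the MDP instance, not to the bandit.

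\textbf{MDP extension.} A start state at which the option is freely selected does not reproduce the veto. The selector can pick $G$ there and arrive at $s_1$ with $C=G$. The paper makes $s_0$ admit only $B$, so the prefix rebuilds $C=B$ at $s_1$ in every episode. The bandit argument then applies unchanged and gives $\mathbb E\operatorname{Reg}_N^{\mathrm{gi}}\ge1+(T-8)/64$.
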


\begin{theorem}[Removing termination restores a guarantee]
\label{thm:ucbvi}
Under $\beta\equiv1$ the two-level agent is isomorphic to a flat agent on the MDP
whose actions are the pairs $(o,a)$, so UCBVI-BF gives, with probability at least
$1-\delta$ and $L=\ln(5HSDT/\delta)$,
$\operatorname{Reg}_N\le30L\sqrt{HSDT}+2500H^2S^2DL^2+4H\sqrt{TL}
=\widetilde O(\sqrt{HSDT})$, where $S$ counts states, $D$ option--action pairs
per state, $H$ the horizon and $T=NH$.
\end{theorem}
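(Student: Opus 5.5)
The proof is a reduction followed by a direct invocation of UCBVI-BF. We first make the two-level model of Appendix~\ref{app:theory} precise under $\beta\equiv1$. At every step $h$ in state $s$ the upper level picks $o$, the lower level picks $a$ inside $o$, the environment transitions by $P(\cdot\mid s,a)$ and pays $r(s,a)$, and because termination fires unconditionally the next step starts again from the selector. We then define the flat MDP $\widetilde M$. It has the same states, horizon $H$ and kernel, its action set at each state is the product of options and actions, of size $D$, and its transition and reward are $\widetilde P(\cdot\mid s,(o,a))=P(\cdot\mid s,a)$ and $\widetilde r(s,(o,a))=r(s,a)$.

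The isomorphism claim has two parts. The first is a map on policies. Any history-dependent two-level policy $(\pi_\Omega,\{\pi_o\})$ induces a flat policy with $\widetilde\pi((o,a)\mid s,\text{history})=\pi_\Omega(o\mid\cdot)\,\pi_o(a\mid s,\cdot)$. Conversely, any flat policy factors as its marginal over $o$ times the conditional over $a$. Both maps preserve the law of the state--reward trajectory, so values and optimal values coincide: $V^\star_{\widetilde M}=V^\star$. Regret is therefore the same quantity in both models.

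The second part concerns the learning algorithm. We specify the two-level UCBVI agent as one that maintains bonus-augmented optimistic estimates $\widetilde Q_h(s,o,a)$ on pairs. Its upper level selects $o$ by $\max_a\widetilde Q_h(s,o,a)$ and its lower level selects the argmax $a$ within $o$. We check that the composition of these two argmaxes equals the joint argmax over $(o,a)$, with ties broken by a fixed order. The counts, empirical kernels and Bernstein--Freedman bonuses are indexed by $(s,(o,a))$, which is exactly UCBVI-BF run on $\widetilde M$. Note that $\widetilde P(\cdot\mid s,(o,a))$ is the same for every $o$ with the same $a$. We do not pool these samples, so that the algorithm is literally UCBVI-BF with $D$ actions; pooling could only help but would need a separate argument.

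With the reduction in place, we apply the UCBVI-BF theorem of Azar et al.\ (2017, Theorem~2) on $\widetilde M$ with $A=D$. It gives, with probability at least $1-\delta$, $\operatorname{Reg}_N\le 30L\sqrt{HSDT}+2500H^2S^2DL^2+4H\sqrt{TL}$ where $L=\ln(5HSDT/\delta)$. For $T$ large the first term dominates, which yields $\widetilde O(\sqrt{HSDT})$.

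We expect the main obstacle to be the bookkeeping that the two-level agent's choices coincide exactly with the flat optimistic argmax and that the cited theorem's assumptions hold. These assumptions are rewards in $[0,1]$, a nonstationary $H$-step episodic kernel or not according to the cited version, and $T=NH$. The delicate point is tie-breaking and the upper level's use of $\max_a$ rather than a separate option-level bonus. If the appendix's two-level UCB puts its own bonus on options, we would instead define that index as $\max_a$ of the pair index, so the reduction stays exact.
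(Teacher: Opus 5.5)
Your proposal is correct and follows essentially the same route as the paper. The paper also shows that under $\beta\equiv1$ the nested argmax over options and then actions is the joint argmax over pairs, so the hierarchical agent is UCBVI-BF on the augmented MDP $\mathcal M^+$ with $D$ actions, and it then quotes Theorem~2 of Azar et al.\ with $N=T/H$. Two small differences: the paper lets $P(\cdot\mid s,u)$ and $r(s,u)$ depend on the whole pair rather than only on $a$, and it also assumes known deterministic rewards, which you should add to the list of hypotheses you check; your extra policy-correspondence argument for the comparator is harmless.
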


A termination function trained to maximise return therefore has no room in which
to help. Consistent values make it an identity operation; inconsistent values
make it an exploration veto with no uniform sublinear guarantee, and the veto
costs most exactly when the option set grows and the selector has more to
explore. The separation is worst case and not pointwise --- the veto can also
suppress exploration of a genuinely worse option, and Appendix~\ref{app:reverse}
gives an instance where it wins --- but a component whose sign depends on the
instance is not a source of reliable gains either way.

\subsection{What the experiments measure}

\begin{table}[t]
\centering
\caption{\textbf{The measured effect of learned termination reverses as the
option set grows.} Sparse-reward FourRooms, the setting option-critic was
introduced on. Each entry is $100\times(\text{Term}-\beta{\equiv}1)/\beta{\equiv}1$
on the mean step count over the last $100$ training episodes, so negative favours
learned termination, with $95\%$ paired bootstrap intervals. The tabular grid
resamples $10$ rank blocks of $35$ seeds per cell, the neural grid $50$ individual
seeds. $K{=}1$ is omitted because a single option cannot change identity on
termination.}
\label{tab:termination}
\small
\setlength{\tabcolsep}{4pt}
\begin{tabular}{lccc}
\toprule
& \multicolumn{3}{c}{Relative effect (\%) with 95\% CI} \\
\cmidrule(lr){2-4}
Environment & $K=2$ & $K=4$ & $K=8$ \\
\midrule
\multicolumn{4}{l}{\emph{Tabular}} \\
No perturbation      & $-23.8$ \ci{-37.0}{-7.6}  & $+0.7$ \ci{-12.0}{13.8} & $+11.1$ \ci{7.6}{14.2} \\
Local slip $p=1/3$   & $-11.6$ \ci{-21.3}{-0.8}  & $-0.9$ \ci{-11.3}{10.1} & $+29.9$ \ci{22.1}{36.3} \\
Local slip $p=1/2$   & $-18.8$ \ci{-26.2}{-10.3} & $+4.8$ \ci{1.0}{9.0}    & $+37.0$ \ci{33.7}{40.7} \\
Teleport $p=1/3$     & $-20.0$ \ci{-28.6}{-10.6} & $+1.8$ \ci{-4.1}{7.3}   & $+29.9$ \ci{24.5}{34.6} \\
Teleport $p=1/2$     & $-14.6$ \ci{-25.0}{-3.6}  & $+5.0$ \ci{0.3}{11.2}   & $+20.6$ \ci{17.0}{24.3} \\
\midrule
\multicolumn{4}{l}{\emph{Neural}} \\
Direct               & $+5.3$ \ci{0.4}{11.3}     & $+0.3$ \ci{-21.4}{28.0} & $-29.0$ \ci{-59.0}{20.2} \\
Inertia              & $-1.0$ \ci{-3.4}{1.3}     & $+0.2$ \ci{-13.1}{15.9} & $+16.4$ \ci{-18.2}{65.0} \\
\bottomrule
\end{tabular}
\end{table}

Forcing $\beta\equiv1$ removes temporal extension outright: the agent draws a new
option after every primitive step, so an option is nothing more than a latent
index refreshed at every state. Temporal extension is the only thing the control
takes away, so a scaling result that rested on it should collapse here.

It does not. Table~\ref{tab:termination} gives the measured effect in the
sparse-reward setting option-critic was introduced on, on the metric the original
scaling result was reported on. The two conditions do differ, and they differ in
the way Theorem~\ref{thm:separation} predicts: the sign tracks how much
exploration the selector has to do. In the tabular grid learned termination wins
all five kernels at $K{=}2$ by $11.6\%$ to $23.8\%$, with every interval below
zero; it is within $\pm5\%$ in all five at $K{=}4$, with three intervals covering
zero; and it loses all five at $K{=}8$ by $11.1\%$ to $37.0\%$, with every
interval above zero. With two options there is little for the veto to block; with
eight, the router has a large candidate set and blocking it costs. The neural
grid gives no consistent direction, with a $29.0\%$ advantage in Direct at
$K{=}8$ against a $16.4\%$ disadvantage in Inertia, and intervals wide enough at
$K{=}4$ and $K{=}8$ to admit either sign. A component whose measured contribution
reverses with a hyperparameter it is meant to be orthogonal to is not a stable
source of gains, and the option-count scaling it is supposed to explain survives
its removal.

The shortest-path variant lets us make the comparison without relying on training
curves at all, because both arms can be evaluated by exact frozen dynamic
programming. There the paired difference between learned option-critic and its
$\beta\equiv1$ match contains zero at every option count, and the two agents
trace the same eightfold improvement as $K$ grows,
$962.5\rightarrow821.4\rightarrow377.2\rightarrow117.4$ expected steps against
$963.2\rightarrow822.3\rightarrow384.4\rightarrow122.5$. Whatever produces that
improvement is present when options have no temporal extent at all.
Table~\ref{tab:termination-frozen} in Appendix~\ref{app:extra} gives those
numbers, and Figure~\ref{fig:termination} there redraws
Table~\ref{tab:termination} as a heat map.

\section{Policy necrosis, and what repairs it}
\label{sec:exploration}

\paragraph{Why states stop learning.}
Look again at the intra-option update in Eq.~\ref{eq:intra}. The quantity that
enters the policy is $w_o(s,a)/\tau$, so with $\tau=10^{-3}$ and $\alpha=0.25$ an
advantage is multiplied by $\alpha/\tau=250$ before it reaches the logits. From a
uniform policy over four actions, an advantage of $0.008$ already leaves the
sampled action with probability above $0.7$, and $0.032$ pushes it past $0.999$.
The commitment is self-sealing: the score function $e_a-\pi_o(\cdot\,|\,s)$ that
multiplies the advantage goes to zero for the committed action, and the
alternatives that carry a non-vanishing score are sampled with probability close
to zero. Under a sparse reward the first advantage a state sees comes from an
essentially uninformed critic, so what it commits to is close to arbitrary, and
further training on the same option does not repair it. This is the softmax
gravity well described by \citet{mei2020escaping}, with the temperature turning a
slow drift into a one-update event.

\paragraph{A matched fork isolates the repair.}
To test the account we train a common $500$-episode prefix with $\epslo=0$,
snapshot the entire learner including the random number generator, and continue
along two branches: a control that keeps $\epslo=0$ and a rescue branch that
mixes in uniform random actions with probability $\epslo=0.1$. The rescue branch
changes only which actions are executed, so nothing about the learning rule
changes. Both branches are evaluated with $\epslo=0$, so any difference comes
from parameters written during training and not from randomness at test time;
the episode-500 checkpoints of the two branches are bit-for-bit identical.

\begin{figure}[t]
\centering
\includegraphics[width=\textwidth]{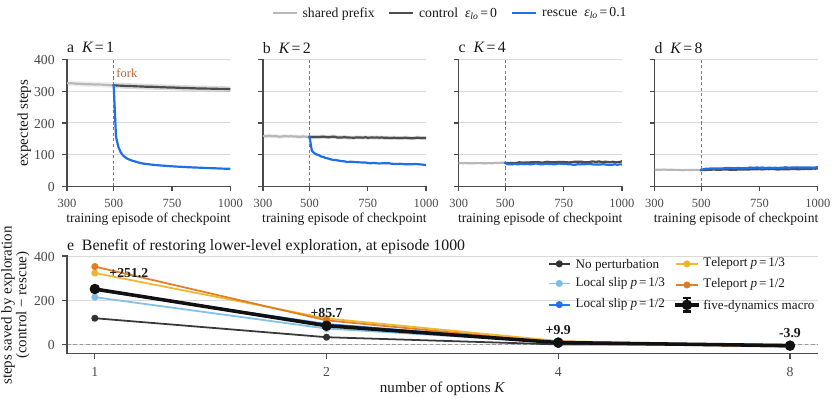}
\caption{\textbf{Restoring lower-level exploration replaces the entire benefit of
extra options.} (a--d) Frozen exact evaluation of every checkpoint, five-kernel
average, per option count. Training shares a $\epslo=0$ prefix through episode
$500$ and then forks; both branches are evaluated with $\epslo=0$. Bands are
$95\%$ bootstrap intervals over $350$ matched seeds. (e) Paired step reduction at
episode $1000$, per kernel and averaged.}
\label{fig:exploration}
\end{figure}

\begin{table}[t]
\centering
\caption{\textbf{Repairing the lower level removes the work that extra options
were doing.} Left: sparse-reward tabular FourRooms, five-kernel average of
expected capped transitions at the episode-1000 checkpoint, paired over $350$
seeds. Right: constant-cost FourRooms, $\beta\equiv1$ agents trained with and
without lower-level exploration, evaluated identically. All evaluations use
$\epslo=0$.}
\label{tab:exploration}
\small
\setlength{\tabcolsep}{4.5pt}
\begin{tabular}{cccccc@{\hspace{14pt}}cc}
\toprule
& \multicolumn{5}{c}{Sparse reward: matched fork at episode 500} & \multicolumn{2}{c}{Constant cost} \\
\cmidrule(lr){2-6}\cmidrule(lr){7-8}
$K$ & control & rescue & reduction & 95\% CI & seeds helped & train $\epslo=0$ & train $\epslo=0.1$ \\
\midrule
1 & 306.66 & 55.42 & $251.23$ & $[242.59,\ 259.99]$ & 350/350 & 963.21 & \textbf{36.50} \\
2 & 152.93 & 67.18 & $85.75$  & $[79.68,\ 91.98]$   & 344/350 & 822.31 & 45.21 \\
4 & 79.14  & 69.28 & $9.86$   & $[6.74,\ 13.03]$    & 211/350 & 384.37 & 55.09 \\
8 & 56.42  & 60.35 & $-3.94$  & $[-5.73,\ -2.14]$   & 138/350 & 122.46 & 85.03 \\
\bottomrule
\end{tabular}
\end{table}

\paragraph{One option is enough.}
Figure~\ref{fig:exploration}a shows what happens with a single option. The
control drifts from $318.7$ at the fork to $306.7$ expected steps by the end of
training; the rescue branch is at $156.1$ ten episodes later, $76.6$ by episode
$600$, and $55.4$ at the end. All $350$ seeds improve, and the paired reduction
is $251.2$ steps \ci{242.59}{259.99}, running from $119.6$ under no perturbation
to $352.3$ under teleport with $p=\tfrac12$ (Figure~\ref{fig:exploration}e).
Under the constant-cost reward the same intervention takes a single
$\beta\equiv1$ option from $963.2$ to $36.5$ expected steps and its frozen
success probability from $0.037$ to $0.999$. One option, evaluated with no
exploration at all, solves the task.

\paragraph{Why the benefit shrinks as $K$ grows.}
Panels b--d of Figure~\ref{fig:exploration} repeat the fork at $K=2,4,8$, and the
gap the rescue branch opens shrinks each time. The paired reduction falls from
$251.2$ steps at $K{=}1$ to $85.7$, then $9.9$, and at $K{=}8$ it turns negative
at $-3.9$ \ci{-5.73}{-2.14}, with only $138$ of $350$ seeds improving
(Table~\ref{tab:exploration}). Under the constant-cost reward the exploring agent
gets steadily worse as options are added, from $36.5$ to $85.0$ expected steps.
The two interventions are not interchangeable and do not act on the same object.
Exploration acts on the cause, restoring the sampling that lets a committed state
recover, and Section~\ref{sec:redundancy} shows it lowers the single-option
necrosis rate directly; extra options leave every individual policy where it was
and act on the aggregate, by making it unlikely that all of them fail in the same
place. The shrinking gap records the interaction: multiplicity hides most of the
cost of necrosis, leaving a repair little headroom, and once the repair has
removed the necrosis multiplicity has nothing to compensate for.

\paragraph{The neural implementation agrees, through two different mechanisms.}
Figure~\ref{fig:neural} repeats the comparison in continuous FourRooms with
$\beta\equiv1$ throughout, contrasting the base softmax against explicit
$\eps$-greedy behaviour and against entropy-regularised sampling. With one option
the base softmax sits at the $1000$-step cap in both environments ($977$ and
$970$), $\eps$-greedy brings it to $299$ and $250$, and entropy regularisation to
$207$ and $156$. Adding options helps the base softmax a great deal, helps
$\eps$-greedy much less, and makes the entropy configuration monotonically worse:
$207\rightarrow225\rightarrow244\rightarrow260$ in Direct and
$156\rightarrow158\rightarrow185\rightarrow221$ in Inertia. Inertia runs at
$\tau=0.5$, five hundred times the tabular temperature, and its one-option base
softmax still fails, so the failure is no artefact of the extreme tabular
setting.

\begin{figure}[t]
\centering
\includegraphics[width=\textwidth]{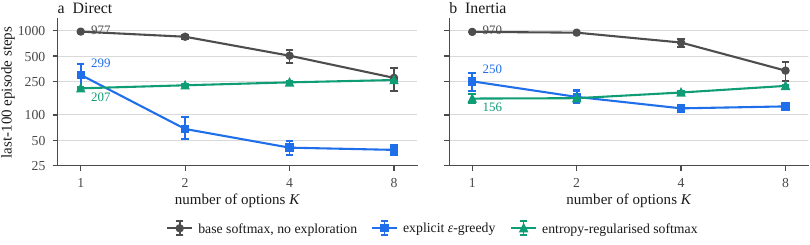}
\caption{\textbf{Two exploration mechanisms make one option sufficient in the
neural implementation.} Mean steps over the last $100$ training episodes with
$\beta\equiv1$ throughout, $50$ seeds per point, $95\%$ bootstrap intervals, log
scale. Numbers mark the one-option values.}
\label{fig:neural}
\end{figure}

\section{What extra options provide is redundant coverage}
\label{sec:redundancy}

If the lower level is necrotic and extra options carry the load instead, what
exactly is the load they carry? Measuring that calls for the constant-cost
reward, where the optimal policy is the shortest path, so the acceptable action
set at a state is exactly the set of distance-reducing actions and the necrosis
test is combinatorial, with no threshold and no dependence on a value scale.
Under the sparse reward the value function encodes reachability and the numerical
spread between distinct reachable routes is small, which makes a shortest-path
yardstick a coarser instrument there. We therefore read the coverage story off
the constant-cost grid, and Figure~\ref{fig:sparse-coverage} in
Appendix~\ref{app:extra} reports the sparse-reward version, which agrees.

\begin{figure}[t]
\centering
\includegraphics[width=\textwidth]{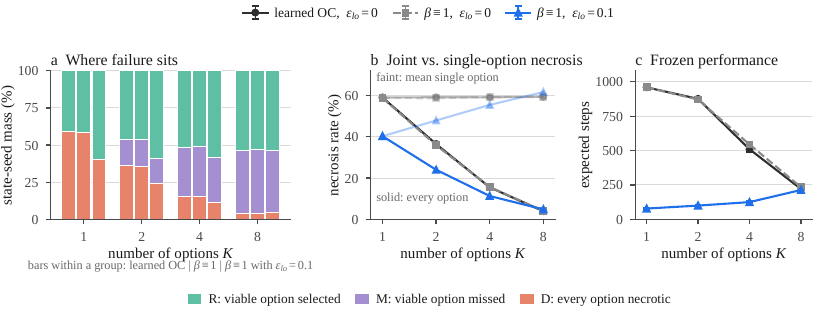}
\caption{\textbf{More options make simultaneous failure rare without making any
option better.} Constant-cost FourRooms under teleport $p=\tfrac12$,
episode-1000 checkpoints, $350$ matched seeds, threshold-free necrosis test.
(a) Every (state, seed) pair in exactly one class. (b) The rate at which every
option is necrotic (solid) against the mean single-option rate (faint). (c) Exact
frozen performance of the same checkpoints. Learned option-critic and its
$\beta\equiv1$ match coincide throughout.}
\label{fig:coverage}
\end{figure}

\begin{table}[t]
\centering
\caption{\textbf{Going from one option to eight, within seed.} Constant-cost
FourRooms, teleport $p=\tfrac12$, episode-1000 checkpoints, $350$ matched seeds.
Necrosis rates are in percentage points and steps are expected capped
transitions; negative means the eight-option agent is better. Joint necrosis
collapses in every row while single-option necrosis does not improve in any of
them.}
\label{tab:redundancy}
\small
\setlength{\tabcolsep}{4pt}
\begin{tabular}{lccc}
\toprule
Agent & $\Delta$ all necrotic & $\Delta$ single-option necrosis & $\Delta$ expected steps \\
\midrule
learned OC, train $\epslo=0$      & $-55.11$ \ci{-55.65}{-54.57} & $+0.31$ \ci{-0.21}{0.83}  & $-739.0$ \ci{-749.6}{-728.1} \\
$\beta\equiv1$, train $\epslo=0$  & $-54.66$ \ci{-55.22}{-54.10} & $+0.57$ \ci{0.02}{1.12}   & $-726.6$ \ci{-739.7}{-713.2} \\
$\beta\equiv1$, train $\epslo=0.1$& $-35.49$ \ci{-35.98}{-34.98} & $+21.39$ \ci{20.90}{21.88} & $+134.9$ \ci{125.4}{144.8} \\
\bottomrule
\end{tabular}
\end{table}

For an agent that never explored, the mean single-option necrosis rate is
$59.04\%$ at $K{=}1$ and $59.35\%$ at $K{=}8$, and it never leaves that band. Close
to three fifths of the state space is locked onto a wrong action in a typical
option, and that is as true of the eighth option as of the first. The joint rate
behaves completely differently: the probability that a state is necrotic for
every option falls $59.0\rightarrow36.5\rightarrow15.5\rightarrow3.9$ percent,
and the expected steps of those same frozen policies fall
$959.5\rightarrow875.8\rightarrow509.4\rightarrow220.5$. Extra options do not
produce better options; they produce a set whose members fail in different
places. Table~\ref{tab:redundancy} says the same as a within-seed contrast, and
the learned-termination agent and its $\beta\equiv1$ match agree to within their
intervals, which is Section~\ref{sec:termination} in coverage terms.

\paragraph{How independent are the failures?}
If the $K$ necrotic sets were independent, the joint rate at $K{=}8$ would be
$0.594^{8}=1.5\%$ against the measured $3.9\%$; if they were identical, it would
be $59\%$. Solving $\Pr(\mathrm{D})=\Pr(D_o)^{\,\kappa}$ for $\kappa$ gives an
effective number of independent options: $\kappa=1.92$, $3.55$ and $6.20$ at
$K=2,4,8$. Options trained side by side, on shared trajectories with a shared
critic, fail together somewhat more than chance predicts and far less than a
common cause would, so a pool of $K$ options is worth roughly $K/2$ independent
attempts at covering the state space. That also says what limits the account:
the return on adding options is set by how decorrelated their failures are, a
property of the training procedure and not of the option count.

\paragraph{The bottleneck moves upstairs.}
Coverage that exists is not coverage that gets used. As D shrinks, the M class
grows in step: the router picks a necrotic option while a viable one is available
in $0\%$, $17.6\%$, $32.9\%$ and $42.7\%$ of states at $K=1,2,4,8$
(Figure~\ref{fig:coverage}a). By $K{=}8$ the dominant residual error is a routing
error and not a coverage error, which is the failure mode that interest functions
and attention over options were introduced to attack
\citep{khetarpal2020interest,chunduru2022attention}.

\paragraph{Redundancy has a price once the lower level works.}
The third row of Table~\ref{tab:redundancy} is the informative boundary case. An
agent trained with $\epslo=0.1$ still gets a coverage benefit from more options,
with joint necrosis falling by $35.5$ points, but its single-option rate rises by
$21.4$ points and its performance gets \emph{worse} by $134.9$ steps: a fixed
budget spread over eight options degrades each one, and the router must
distinguish eight policies instead of one. Redundancy earns its keep when there
is a failure to be redundant against, and charges rent when there is not.

Figure~\ref{fig:statemaps-full} in Appendix~\ref{app:extra} shows the transition
on the map itself. For the agent that never explored, joint
necrosis occupies $62$ of $103$ cells at $K{=}1$ and none at $K{=}8$, and four of
every five cells it vacates turn into routing misses instead of correct routes.
Its $\beta\equiv1$ match is indistinguishable from it, cell by cell. The agent
that explored starts at $37$ necrotic cells with one option and ends with $4$ at
$K{=}8$, while its correctly routed cells fall from $66$ to $50$ along the way.

\section{Related work}
\label{sec:related}

\paragraph{Options and option-critic.}
The options framework formalises temporally extended action
\citep{sutton1999between,precup2000temporal}, alongside earlier hierarchical
formulations \citep{parr1998ham,dietterich2000maxq}. Option-critic learns
intra-option policies and terminations from the task return
\citep{bacon2017option}; extensions cover continuous control
\citep{klissarov2017ppoc}, deeper hierarchies \citep{riemer2018abstract},
off-the-shelf policy optimisation \citep{zhang2019dac} and learned initiation
\citep{khetarpal2020interest,chunduru2022attention}, while other lines build
options from environment structure or intrinsic objectives
\citep{simsek2009betweenness,konidaris2009skillchaining,machado2017laplacian,machado2018eigenoption,eysenbach2019diayn,jinnai2019covertime,bagaria2020skillchaining}
or hierarchy over goals
\citep{kulkarni2016hdqn,vezhnevets2017feudal,nachum2018hiro}. Regret guarantees
for option-based learning exist under structural assumptions
\citep{drappo2024option}; Section~\ref{sec:theory} asks instead what a
return-trained termination rule can guarantee.

\paragraph{Termination collapse and its remedies.}
That option-critic terminations drift towards $1$ is documented and has motivated
direct remedies: a deliberation cost that charges for switching
\citep{harb2018waiting}, and a termination objective built on option models
instead of the control objective \citep{harutyunyan2019termination}. Those works
treat collapse as an empirical nuisance to engineer around.
Section~\ref{sec:theory} shows the drift is a property of the objective itself,
so those remedies act as stopgaps: they hold $\beta$ away from the boundary
without removing the reason it moves there.

\paragraph{Where hierarchy's benefits come from.}
\citet{nachum2019whyhierarchy} isolate the claimed benefits of goal-conditioned
hierarchies on continuous control and attribute most of the gain to improved
exploration in the high-level action space. Our result points the same way and
localises the mechanism one level lower, inside the option's own policy
parameterisation, where it is visible state by state.

\paragraph{Softmax policy gradient.}
\citet{mei2020global} and \citet{mei2020escaping} show that softmax-parameterised
policy gradient is initialisation-sensitive and slows near the corners of the
simplex, and \citet{agarwal2021theory} show that exploration or regularisation is
needed for global convergence; entropy regularisation is the standard practical
response \citep{williams1992reinforce,mnih2016a3c,haarnoja2018sac}. Policy
necrosis is that phenomenon at $\tau=10^{-3}$, where the committal step takes one
update, and what we add is that option-critic covers for it with options.

\paragraph{Empirical practice.}
Reported gains often survive poorly once controls and seed counts are taken
seriously
\citep{henderson2018matters,engstrom2020implementation,andrychowicz2021matters,agarwal2021precipice,patterson2024empirical}.
This paper is an instance in the hierarchical setting.

\section{Discussion}
\label{sec:discussion}

The three results compose into one account. A near-deterministic softmax with a
large effective step size commits states to arbitrary actions and stops updating
them, leaving most of the state space necrotic for any given option; the
diagnostic of Section~\ref{sec:diagnostics} locates that failure state by state,
and restoring lower-level exploration repairs it, at which point one option
solves the task. Extra options repair nothing individually. They supply policies
whose necrotic sets only partly overlap, so joint failure falls while individual
failure does not, performance follows the joint quantity, and the benefit goes
with the failure it was covering. Learned termination plays no part in any of
this, and Section~\ref{sec:theory} explains why it never could: the rule the
termination gradient converges to does nothing at all when its values agree with
the selector's, and blocks the selector's exploration when they do not. The option
count measures redundancy against a repairable defect, and the case for learned
temporal abstraction has to be made somewhere else.

\paragraph{A structural prior can be credited for a defect elsewhere.}
The broader reading is not about options. A structural component sits on a base
learner, and if the base learner has a repairable failure, the component is
credited with whatever that failure was costing. The option count is an unusually
legible example: individual policies stay broken, their failures decorrelate, and
the aggregate improves. The same accounting applies to any architectural prior
evaluated by a curve over its own capacity, and it is settled the same way, by
repairing the base learner first and asking whether the prior still earns its
keep. Redundancy also deserves naming as a mechanism: ensembles obtain it
deliberately \citep{osband2016bootstrapped} and option-critic stumbles into it,
so the lever is not the number of options but how decorrelated their failures
are.

\paragraph{Limitations.}
Two boundaries are worth stating. The diagnostic needs a notion of which actions
are acceptable in a state, which an exactly solvable environment supplies and a
large one does not, so elsewhere it requires a surrogate whose quality any
conclusion inherits. And the exchange rate between options and exploration is set
by $\kappa$, a property of how the options were trained together, so the point at
which multiplicity stops helping moves with the training procedure even though
the mechanism does not.

\paragraph{Future work.}
Make redundancy deliberate: diversity objectives should be judged by how much
they raise $\kappa$ at fixed $K$, a cheaper and sharper target than end-to-end
return. Carry the necrosis test past exactly solvable environments, either
against a reference policy or learned critic, or by dropping the oracle and
tracking the two quantities that make necrosis self-sealing, policy saturation
and the vanishing score-function norm. And ask which termination objectives
escape Section~\ref{sec:theory}: the obstruction applies to rules trained on
return, while deliberation costs and predictability-based termination
\citep{harb2018waiting,harutyunyan2019termination} change the objective, so the
same analysis should say whether either can be shown to help.

\subsection*{Reproducibility statement}

Section~\ref{sec:setup} states the evaluation contract and the hyperparameters,
Appendix~\ref{app:setup} lists the remaining settings for the tabular and neural
experiments, and Appendix~\ref{app:stats} gives the resampling scheme behind
every interval. Section~\ref{sec:diagnostics} gives both necrosis criteria in
full, including the tie-handling rule and the audit of the value-iteration action
sets against breadth-first search. Appendix~\ref{app:theory} contains the model,
the algorithms and complete proofs for Proposition~\ref{prop:vacuous},
Theorem~\ref{thm:separation} and Theorem~\ref{thm:ucbvi}. The accompanying code
release contains the training, checkpoint-evaluation, and analysis code for all
three experiment families, the derived per-seed and summary tables behind every
figure and table, a single entry point that rebuilds every figure in this paper
from those tables without touching a raw checkpoint, and a validator that checks
file hashes and the figure contract.

\subsection*{Ethics statement}

This work is a methodological study of a reinforcement learning algorithm. It involves no human subjects, no personal data,
and no deployment. We see no ethical concerns beyond the general question of how
empirical claims in the field are supported, which the paper addresses directly.

\subsection*{AI use statement}

We used generative AI tools for readability editing of the manuscript. We have not used generative AI
tools for proposing or refining the hypotheses, designing or critiquing the
methodology or experiments, implementing the methods, interpreting the results,
formulating the mathematical claims, or writing the proofs; generating synthetic
datasets and qualitative data analysis are not applicable to this work. We have
reviewed all AI-assisted work: every number in the paper was read back from the
released derived data files by an author, the figure-generation code was checked
against those files, and every theorem statement and proof was written and
verified by the authors. We take responsibility for the final content of this
work, including text, claims, and artifacts produced with the aid of generative
AI.

\bibliography{refs}
\bibliographystyle{iclr2027_conference}

\appendix

\section{Experimental settings}
\label{app:setup}

\paragraph{Tabular experiments.}
FourRooms is a $13\times13$ grid with $104$ free cells; the goal is the east
doorway, and the $103$ remaining cells form both the initial-state distribution
and the domain of the state-uniform diagnostics. Actions are the four cardinal
moves; a move into a wall leaves the agent in place. The intra-option policy is a
softmax over a $104\times4$ preference table with temperature $\tau=10^{-3}$ and
learning rate $0.25$; the termination function is a sigmoid over a $104$-vector
with learning rate $0.25$; the critic maintains tabular $Q_\Omega$ and $Q_U$ over
the same $104$ states with learning rate $0.5$; the policy over options is
$\epsup$-greedy on $Q_\Omega$ with $\epsup=0.1$. The discount is $0.99$. Each
cell of every grid runs seeds $0$ through $349$ for $1000$ episodes with a
$1000$-step cap. Five distinct kernels are obtained from local slip and global
teleport at $p\in\{0,\tfrac13,\tfrac12\}$, with $p=0$ shared by the two families.
The matched-fork experiment saves a checkpoint every $10$ episodes, giving $51$
prefix checkpoints and $51$ per branch, and snapshots the Mersenne Twister state
so that the fork is exact.

\paragraph{Neural experiments.}
The continuous environments place the agent at cell centres and use the same
$13\times13$ layout. In Direct, an action displaces the agent by one unit in a
cardinal direction with uniform noise in $[-0.1,0.1]$ per coordinate, and the
observation is $(x,y)$. In Inertia, an action changes the corresponding velocity
component by $0.1\pm0.01$, speed is clipped to $0.99$ per axis, position
integrates velocity, a collision zeroes it, and the observation is
$(x,y,v_x,v_y)$. The goal tolerance is half a cell. $Q_\Omega$, $Q_U$, the
policy, and the termination function are two-hidden-layer tanh networks of width
$64$, trained with Adam at learning rate $10^{-3}$, Polyak-averaged target
networks with coefficient $0.005$, a replay buffer of $10{,}000$ transitions,
batches of $64$, and one update every $4$ environment steps after $1000$ steps of
warm-up. Output biases are initialised so that the mean output over a grid of
states is zero. Each configuration runs $50$ seeds for $1000$ episodes with a
$1000$-step cap. Direct uses $\tau=10^{-3}$ for the base and $\eps$-greedy
configurations with $\epslo=0.1$, and $\tau=1$ with entropy coefficient $0.5$ for
the entropy configuration; Inertia uses $\tau=0.5$ throughout, with $\epslo=0.5$
and entropy coefficient $0.1$.

\section{Statistics}
\label{app:stats}

The tabular grids use $350$ training seeds per cell, matched across every
condition being contrasted, so all contrasts are paired within seed. Confidence
intervals are $95\%$ percentile intervals from $100{,}000$ paired bootstrap
resamples over seeds \citep{colas2018seeds,patterson2024empirical}, with fixed
bootstrap seeds recorded alongside the derived tables. The termination grid
retains block-level aggregates, so its resampling unit is a $35$-seed rank block,
ten per cell; the neural experiments resample $50$ individual seeds. Where the five
kernels are pooled we average within a seed first and bootstrap over seeds,
counting the duplicated $p=0$ cell once. The frozen evaluation is an exact
dynamic program and contributes no sampling error, so every reported interval
reflects training-seed variability alone.

\section{Termination rules and regret}
\label{app:theory}

This appendix supplies the model, the algorithms and the proofs behind
Section~\ref{sec:theory}. Two remarks frame what is and is not claimed. First,
``both levels use UCB'' does not by itself pin down an algorithm, so the analysis
fixes a minimal but non-trivial two-level model in which the interaction between
termination and exploration is exposed without any other moving part. Second,
the negative results are constructions inside that model, which is a special
case of the option setting, so they suffice to rule out a general guarantee.

\subsection{A two-level stochastic bandit}
\label{app:model}

Let $T$ be the number of atomic steps. The environment has a single recurring
state $s$. Let $\mathcal O=\{1,\dots,K\}$ be the option set and $\mathcal A_o$
the actions available inside option $o$, and write
$\mathcal L=\{(o,a):o\in\mathcal O,\ a\in\mathcal A_o\}$ for the set of
option--action leaves, with $M=|\mathcal L|$. Pulling leaf $i\in\mathcal L$
returns an independent reward in $[0,1]$ with mean $\mu_i$. Put
$\mu_o^\star=\max_{a}\mu_{o,a}$, $\mu^\star=\max_o\mu_o^\star$, and define the
upper, lower and total gaps
\[
\Delta_o^{\mathrm{up}}=\mu^\star-\mu_o^\star,
\qquad
\Delta_{o,a}^{\mathrm{lo}}=\mu_o^\star-\mu_{o,a},
\qquad
\Delta_{o,a}=\mu^\star-\mu_{o,a}=\Delta_o^{\mathrm{up}}+\Delta_{o,a}^{\mathrm{lo}}.
\]
With $I_t=(O_t,A_t)$ the leaf executed at step $t$ and $N_i(T)$ its visit count,
the pseudo-regret is
$\mathcal R_T=T\mu^\star-\mathbb E[\sum_{t\le T}\mu_{I_t}]
=\sum_i\Delta_i\,\mathbb E[N_i(T)]$.
Splitting by level, with $N_o(T)=\sum_aN_{o,a}(T)$,
\[
\mathcal R_T^{\mathrm{up}}=\sum_o\Delta_o^{\mathrm{up}}\mathbb E[N_o(T)],
\qquad
\mathcal R_T^{\mathrm{lo}}=\sum_{o,a}\Delta_{o,a}^{\mathrm{lo}}\mathbb E[N_{o,a}(T)].
\]

\begin{lemma}[Exact decomposition]
\label{lem:decomp}
For any selection algorithm and any $T$,
$\mathcal R_T=\mathcal R_T^{\mathrm{up}}+\mathcal R_T^{\mathrm{lo}}$.
\end{lemma}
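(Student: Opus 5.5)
The decomposition is purely algebraic, so the plan is to start from the leaf-level expression for pseudo-regret and substitute the gap identity. By definition $\mathcal R_T=\sum_{i\in\mathcal L}\Delta_i\,\mathbb E[N_i(T)]$. That form follows from linearity of expectation: write $\sum_{t\le T}\mu_{I_t}=\sum_i\mu_i N_i(T)$ and use $\sum_i N_i(T)=T$, so that $T\mu^\star-\mathbb E[\sum_t\mu_{I_t}]=\sum_i(\mu^\star-\mu_i)\mathbb E[N_i(T)]$. This step needs no assumption on the selection algorithm beyond exactly one leaf being executed per atomic step. That holds in the model, since $I_t=(O_t,A_t)$ is a single leaf.

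Next I would insert the identity $\Delta_{o,a}=\Delta_o^{\mathrm{up}}+\Delta_{o,a}^{\mathrm{lo}}$. It holds by telescoping: $(\mu^\star-\mu_o^\star)+(\mu_o^\star-\mu_{o,a})$. Indexing leaves as pairs $(o,a)$, this gives
\[
\mathcal R_T=\sum_{o}\sum_{a\in\mathcal A_o}\Delta_o^{\mathrm{up}}\,\mathbb E[N_{o,a}(T)]+\sum_{o,a}\Delta_{o,a}^{\mathrm{lo}}\,\mathbb E[N_{o,a}(T)].
\]
In the first double sum the gap $\Delta_o^{\mathrm{up}}$ does not depend on $a$, so it can be pulled out of the inner sum. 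Then linearity of expectation together with $N_o(T)=\sum_a N_{o,a}(T)$ turns the inner sum into $\Delta_o^{\mathrm{up}}\mathbb E[N_o(T)]$. That is exactly $\mathcal R_T^{\mathrm{up}}$, and the second sum is $\mathcal R_T^{\mathrm{lo}}$ by definition.

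There is no real obstacle; everything is finite sums. The only points to state explicitly are these. First, the sums are finite, because $\mathcal L$ is finite and $N_i(T)\le T$, so exchanging expectation and summation is trivially justified. Second, the identity holds for every algorithm and every $T$, because none of the steps uses how $I_t$ is chosen; in particular, termination rules affect only the counts $N_{o,a}(T)$, not the decomposition. Finally, I would note as a remark that all gaps are nonnegative, so both terms are nonnegative. This is what lets the later proofs bound each level separately and, for the lower bound, use $\mathcal R_T\ge\mathcal R_T^{\mathrm{up}}$ or $\mathcal R_T\ge\mathcal R_T^{\mathrm{lo}}$.
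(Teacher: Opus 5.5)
Your proposal is correct and follows the paper's proof: substitute $\Delta_{o,a}=\Delta_o^{\mathrm{up}}+\Delta_{o,a}^{\mathrm{lo}}$ into the leaf-level sum, then collapse the first part using $N_o(T)=\sum_a N_{o,a}(T)$. The one extra step is your derivation of $\mathcal R_T=\sum_i\Delta_i\,\mathbb E[N_i(T)]$ from $\sum_i N_i(T)=T$, which the paper builds into the definition of pseudo-regret.
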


\begin{proof}
Substitute $\Delta_{o,a}=\Delta_o^{\mathrm{up}}+\Delta_{o,a}^{\mathrm{lo}}$ into
$\mathcal R_T=\sum_{o,a}\Delta_{o,a}\mathbb E[N_{o,a}(T)]$ and use
$N_o(T)=\sum_aN_{o,a}(T)$ on the first summand.
\end{proof}

\subsection{Compatible two-level UCB, and the two termination rules}
\label{app:algos}

Each leaf is pulled once during initialisation. Afterwards, write
$b_i(t)=\widehat\mu_i(t-1)+\sqrt{2\ln t/N_i(t-1)}$ for the UCB1 index of leaf
$i$. The lower level proposes $a_o^+(t)\in\arg\max_{a\in\mathcal A_o}b_{o,a}(t)$
inside each option, the upper level scores options by
$B_o(t)=b_{o,a_o^+(t)}(t)=\max_{a}b_{o,a}(t)$ and proposes
$J_t\in\arg\max_o B_o(t)$. Ties are broken by a fixed deterministic rule. The two
levels are compatible in the sense that the option score is the maximum of its
own leaf indices, which is what makes UCB1's assumptions applicable.

Two algorithms are compared.
$\mathsf A_{\mathrm{fix}}$ fixes $\beta\equiv1$: at every step it sets
$O_t=J_t$ and executes $A_t=a_{O_t}^+(t)$.
$\mathsf A_{\mathrm{gi}}$ carries a current option $C_t$ into the step and
applies the greedy-interruption rule~\eqref{eq:hard}: if $\beta_t=1$ it sets
$O_t=J_t$, and if $\beta_t=0$ it sets $O_t=C_t$; either way it then executes
$A_t=a_{O_t}^+(t)$ and passes $C_{t+1}=O_t$ to the next step. Write
$\mathcal R_T^{\mathrm{fix}}$ and $\mathcal R_T^{\mathrm{gi}}$ for their
pseudo-regrets; $\mathcal R_T^{\mathrm{fix}}$ is the $\mathcal R_T^{\beta\equiv1}$
of the main text.

\subsection{Fixed \texorpdfstring{$\beta\equiv1$}{beta=1}: finite-time bounds}
\label{app:fix}

\begin{theorem}[Stepwise equivalence to leaf-level UCB1]
\label{thm:app-equiv}
Under $\mathsf A_{\mathrm{fix}}$, $I_t\in\arg\max_{i\in\mathcal L}b_i(t)$ at every
step, so $\mathsf A_{\mathrm{fix}}$ is UCB1 run directly on the $M$ leaves.
\end{theorem}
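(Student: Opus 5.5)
The plan is to unfold the definitions of $\mathsf A_{\mathrm{fix}}$ and show that the leaf it executes maximises the UCB1 index over all of $\mathcal L$. Fix a step $t$ after initialisation. By definition $O_t=J_t\in\arg\max_o B_o(t)$, and $A_t=a^+_{O_t}(t)\in\arg\max_{a\in\mathcal A_{O_t}} b_{O_t,a}(t)$. So the executed leaf has index $b_{I_t}(t)=\max_a b_{O_t,a}(t)=B_{O_t}(t)$. The key identity is that a maximum over a union equals the maximum of the maxima:
\[
\max_{i\in\mathcal L} b_i(t)=\max_{o\in\mathcal O}\max_{a\in\mathcal A_o} b_{o,a}(t)=\max_o B_o(t)=B_{J_t}(t).
\]
This is exactly the compatibility property noted in Appendix~\ref{app:algos}. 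Combining the two displays gives $b_{I_t}(t)=\max_{i\in\mathcal L}b_i(t)$, hence $I_t\in\arg\max_i b_i(t)$.

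Next I would check that this makes $\mathsf A_{\mathrm{fix}}$ a genuine instance of UCB1 on the $M$ leaves, and not merely an algorithm whose picks share UCB1's index values. I would argue by induction on $t$. The initialisation phases coincide, since each leaf is pulled once in both. Suppose the two algorithms have the same history up to step $t-1$. Then $\widehat\mu_i(t-1)$ and $N_i(t-1)$ agree, so the indices $b_i(t)$ agree. By the first paragraph, the leaf chosen by $\mathsf A_{\mathrm{fix}}$ is an index maximiser, which is an admissible UCB1 choice. Since rewards depend only on the leaf pulled, the histories agree at step $t$ as well, pathwise under a common reward coupling.

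The main obstacle is tie-breaking. When several leaves share the maximal index, the composite rule (break ties first among options, then among actions within the chosen option) induces some fixed deterministic rule on $\mathcal L$. It need not coincide with a given flat tie-breaking convention. I would handle this in one of two ways. The first is to define UCB1 as any rule selecting from $\arg\max_i b_i(t)$, which is all that UCB1's regret analysis uses, and which matches the "$\in\arg\max$" phrasing of the statement. The second is to note that the composite order is itself a fixed deterministic order on leaves, so $\mathsf A_{\mathrm{fix}}$ is UCB1 run with that order. With this settled, the finite-time UCB1 bounds carry over directly, and they feed the $\beta\equiv1$ half of Theorem~\ref{thm:separation}.
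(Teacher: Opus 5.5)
Your proposal is correct and follows the paper's own route: the paper's proof is exactly your chain $b_{I_t}(t)=B_{J_t}(t)=\max_o B_o(t)=\max_o\max_a b_{o,a}(t)=\max_i b_i(t)$, followed by the remark that the fixed tie-breaking rule selects the same leaf in both formulations. Your induction on histories and your point that the nested option-then-action tie-break is a deterministic function of the leaf-level argmax set, and hence a fixed leaf-level rule, spell out what the paper compresses into that final clause.
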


\begin{proof}
$b_{I_t}(t)=B_{J_t}(t)=\max_oB_o(t)=\max_o\max_ab_{o,a}(t)=\max_ib_i(t)$, and the
fixed tie-breaking rule selects the same leaf in both formulations.
\end{proof}

\begin{theorem}[Instance-dependent bound]
\label{thm:app-inst}
Let $C_0=1+\pi^2/3$. For rewards in $[0,1]$ and any $T\ge M$,
\[
\mathcal R_T^{\mathrm{fix}}\;\le\;
\sum_{(o,a):\Delta_{o,a}>0}\Bigl(\frac{8\ln T}{\Delta_{o,a}}+C_0\Delta_{o,a}\Bigr),
\]
and, level by level,
$\mathcal R_T^{\mathrm{up,fix}}\le\sum_{(o,a):\Delta_{o,a}>0}
\Delta_o^{\mathrm{up}}(8\ln T/\Delta_{o,a}^2+C_0)$ and
$\mathcal R_T^{\mathrm{lo,fix}}\le\sum_{(o,a):\Delta_{o,a}>0}
\Delta_{o,a}^{\mathrm{lo}}(8\ln T/\Delta_{o,a}^2+C_0)$.
\end{theorem}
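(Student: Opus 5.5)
By Theorem~\ref{thm:app-equiv}, $\mathsf A_{\mathrm{fix}}$ is exactly UCB1 run on the $M$ leaves. It has the same initialisation (one pull per leaf), the same index $b_i(t)$ and the same deterministic tie-breaking. I will therefore derive the bound from the standard finite-time UCB1 analysis of Auer, Cesa-Bianchi and Fischer, applied to the flat bandit with means $\mu_i$ and gaps $\Delta_i=\mu^\star-\mu_i$. The target intermediate statement is the per-leaf count bound
\[
\mathbb E[N_i(T)]\;\le\;\frac{8\ln T}{\Delta_i^2}+C_0,\qquad \Delta_i>0,\ T\ge M.
\]
Multiplying by the right weights then gives all three claims.

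To prove the count bound, fix a suboptimal leaf $i$ and an optimal leaf $i^\star$, and set $\ell=\lceil 8\ln T/\Delta_i^2\rceil$. I will write $N_i(T)\le \ell+\sum_{t>M}\mathbf 1\{I_t=i,\ N_i(t-1)\ge\ell\}$. The event $I_t=i$ requires $b_i(t)\ge b_{i^\star}(t)$. Given $N_i(t-1)\ge\ell$, this forces one of three things: the optimal index underestimates $\mu^\star$; the empirical mean of $i$ overshoots $\mu_i$ by more than its confidence radius; or $\mu^\star<\mu_i+2\sqrt{2\ln t/N_i}$. The third is impossible once $N_i\ge\ell$. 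Take a union bound over the possible counts $s,s_i\le t$ of the two arms, and apply Hoeffding's inequality to each of the first two events, each bounded by $t^{-4}$. Summing $\sum_t 2t^2\cdot t^{-4}\le \pi^2/3$ and absorbing the ceiling into the $+1$ yields $C_0=1+\pi^2/3$. Tie-breaking does not matter, because the argument only uses $b_i\ge b_{i^\star}$. The condition $T\ge M$ only ensures that the initialisation has completed, so every $N_i\ge1$ and the indices are defined.

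With the count bound in hand, the conclusions follow. Lemma~\ref{lem:decomp}'s representation $\mathcal R_T=\sum_i\Delta_i\mathbb E[N_i(T)]$ gives $\Delta_i(8\ln T/\Delta_i^2+C_0)=8\ln T/\Delta_i+C_0\Delta_i$, which is the first inequality. For the level-wise bounds I will write
\[
\mathcal R^{\mathrm{up}}_T=\sum_{o,a}\Delta_o^{\mathrm{up}}\mathbb E[N_{o,a}(T)],\qquad
\mathcal R^{\mathrm{lo}}_T=\sum_{o,a}\Delta_{o,a}^{\mathrm{lo}}\mathbb E[N_{o,a}(T)].
\]
Leaves with $\Delta_{o,a}=0$ have $\Delta_o^{\mathrm{up}}=\Delta_{o,a}^{\mathrm{lo}}=0$, because both gaps are nonnegative and sum to $\Delta_{o,a}$. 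Those leaves therefore contribute nothing, and plugging the count bound into the remaining terms gives the stated forms.

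The only delicate step is the bookkeeping in the concentration argument: the union over random sample counts, and getting exactly $C_0=1+\pi^2/3$ rather than a looser constant. I will follow Auer et al.'s proof verbatim, counting from the initial pulls, so that the $\pi^2/3$ term emerges as stated.
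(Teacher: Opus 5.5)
Your proposal is correct and follows the paper's route: it reduces to flat UCB1 on the leaves via Theorem~\ref{thm:app-equiv}, applies the Auer et al.\ count bound $\mathbb E[N_i(T)]\le 8\ln T/\Delta_i^2+C_0$, and multiplies by $\Delta_i$, $\Delta_o^{\mathrm{up}}$ or $\Delta_{o,a}^{\mathrm{lo}}$. You also re-derive the count bound rather than citing it, and you state explicitly that leaves with $\Delta_{o,a}=0$ have zero upper and lower gaps. The paper leaves that second point implicit when it restricts the level-wise sums to $\Delta_{o,a}>0$.
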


\begin{proof}
UCB1 guarantees $\mathbb E[N_i(T)]\le 8\ln T/\Delta_i^2+C_0$ for every suboptimal
arm \citep{auer2002ucb}. By Theorem~\ref{thm:app-equiv} the arms are the leaves,
so multiplying by $\Delta_i$ and summing gives the first display. Multiplying the
same visit-count bound by $\Delta_o^{\mathrm{up}}$ and by
$\Delta_{o,a}^{\mathrm{lo}}$ and applying Lemma~\ref{lem:decomp} gives the other
two.
\end{proof}

\begin{corollary}[Distribution-free bound]
\label{cor:app-df}
For $T\ge\max\{M,2\}$,
$\mathcal R_T^{\mathrm{fix}}\le\min\{T,\;4\sqrt{2MT\ln T}+C_0M\}$.
\end{corollary}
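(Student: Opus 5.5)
The bound $T$ is immediate: every per-step gap satisfies $\Delta_i\le1$, so $\mathcal R_T^{\mathrm{fix}}=\sum_i\Delta_i\,\mathbb E[N_i(T)]\le\sum_i\mathbb E[N_i(T)]=T$. The substantive part is the second bound. I will run the standard threshold argument for UCB1 on top of Theorem~\ref{thm:app-equiv}, which says $\mathsf A_{\mathrm{fix}}$ is UCB1 on the $M$ leaves. I will use the visit-count bound quoted in the proof of Theorem~\ref{thm:app-inst}, namely $\mathbb E[N_i(T)]\le 8\ln T/\Delta_i^2+C_0$ for every suboptimal leaf $i$, rather than the instance bound with $\Delta_i$ in the denominator.

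Fix a threshold $\Delta>0$ and split the leaves into those with $\Delta_i\le\Delta$ and those with $\Delta_i>\Delta$.
\begin{itemize}
\item \emph{Small-gap leaves.} Their total contribution is at most $\Delta\sum_i\mathbb E[N_i(T)]\le\Delta T$.
\item \emph{Large-gap leaves.} Write $\Delta_i\,\mathbb E[N_i(T)]$ using the visit-count bound. This gives at most $8\ln T/\Delta_i+C_0\Delta_i\le 8\ln T/\Delta+C_0$ per leaf, hence at most $8M\ln T/\Delta+C_0M$ in total.
\end{itemize}
Summing, $\mathcal R_T^{\mathrm{fix}}\le\Delta T+8M\ln T/\Delta+C_0M$. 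Choosing $\Delta=\sqrt{8M\ln T/T}$ balances the first two terms. The result is $2\sqrt{8MT\ln T}+C_0M=4\sqrt{2MT\ln T}+C_0M$, which is exactly the stated constant.

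Two routine checks remain.
\begin{itemize}
\item $T\ge\max\{M,2\}$ ensures $\ln T>0$, so $\Delta$ is a well-defined positive threshold. It also ensures the initialisation phase (each leaf pulled once) fits within $T$, which Theorem~\ref{thm:app-inst} already assumed through $T\ge M$.
\item Optimal leaves have $\Delta_i=0$ and fall in the first group, where they contribute nothing.
\end{itemize}

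The only delicate point is making sure the per-leaf bound is applied to $\Delta_i\,\mathbb E[N_i]$ and not to the summed instance bound. Bounding $8\ln T/\Delta_i$ by $8\ln T/\Delta$ relies on $\Delta_i>\Delta$. Bounding $C_0\Delta_i$ by $C_0$ relies on rewards lying in $[0,1]$. With those in place the argument is mechanical, and the minimum with $T$ is just the trivial bound above.
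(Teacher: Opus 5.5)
Your proof is correct and follows the paper's argument: you split the leaves at a threshold, so small gaps contribute $\Delta T$ and large gaps contribute $8M\ln T/\Delta+C_0M$ via the UCB1 visit-count bound; you choose $\Delta=\sqrt{8M\ln T/T}$ to balance the two; and the bound $T$ comes from gaps being at most one. The only cosmetic difference is that the paper restricts the threshold to $(0,1]$ and treats the case $\sqrt{8M\ln T/T}>1$ separately, whereas your split works for any positive threshold because the large-gap set is then empty.
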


\begin{proof}
Split the leaves at a threshold $\eps\in(0,1]$. Leaves with $\Delta_i\le\eps$
contribute at most $T\eps$; the rest contribute at most $8M\ln T/\eps+C_0M$ by
Theorem~\ref{thm:app-inst}. Choosing $\eps=\sqrt{8M\ln T/T}$ makes the first two
terms $4\sqrt{2MT\ln T}$. If that $\eps$ exceeds $1$ the bound already exceeds
the trivial bound $T$, which always holds because single-step gaps are at most
one.
\end{proof}

A matching lower bound is classical: for Bernoulli leaves with a unique optimum
and any consistent algorithm, $\liminf_T\mathcal R_T/\ln T\ge
\sum_{i:\Delta_i>0}\Delta_i/\operatorname{kl}(\mu_i,\mu^\star)$
\citep{lai1985asymptotically}, so the $O(\ln T)$ rate of
Theorem~\ref{thm:app-inst} cannot be improved to $o(\ln T)$ on a fixed
non-degenerate instance.

\subsection{Two value settings for the return-maximising rule}
\label{app:valuesetting}

Rule~\eqref{eq:hard} compares the value of continuing with the value of handing
control back, and everything turns on which estimates those two are.

\paragraph{Optimistic indices.}
Set $Q_t^+(s,o)=B_o(t)$ and let the upper policy be the point mass on $J_t$, so
$V_t^+(s)=B_{J_t}(t)=\max_oB_o(t)$. Proposition~\ref{prop:vacuous} of the main
text is then immediate.

\begin{proof}[Proof of Proposition~\ref{prop:vacuous}]
For any current option $C_t$ we have
$Q_t^+(s,C_t)=B_{C_t}(t)\le\max_oB_o(t)=V_t^+(s)$, so \eqref{eq:hard} selects
$\beta_t=1$ and the step executes $J_t$. Induct on $t$: the two algorithms start
identically; if the histories before step $t$ agree then all counts, sample means
and indices agree, hence $J_t$ agrees, and both algorithms execute $J_t$ with the
same lower-level action, so the histories after step $t$ agree. Action sequences,
and therefore regret, coincide on every sample path.
\end{proof}

\paragraph{Raw-value indices.}
Option-critic's state value is the upper policy's expectation of raw critic
values, while a confidence bonus is a device the exploration rule adds on top. In
the present model that setting is
\[
\widehat q_o(t)=\widehat\mu_{o,a_o^+(t)}(t-1),
\qquad
Q_t(s,o)=\widehat q_o(t),
\qquad
V_t(s)=\widehat q_{J_t}(t),
\tag{OC-UCB}
\]
so the option the selector proposes need not be the one with the largest raw
value. When $B_{J_t}(t)>B_{C_t}(t)$ and $Q_t(s,J_t)<Q_t(s,C_t)$ hold at once, the
selector asks to explore $J_t$ and the termination test refuses.

\subsection{Proof of Theorem~\ref{thm:separation}}
\label{app:sep}

\paragraph{Linear lower bound.}
Take two options. Both actions of the good option $G$ return
$\operatorname{Ber}(3/4)$; both actions of the bad option $B$ return exactly
$1/2$.
Then $\mu_G^\star=3/4$, $\mu_B^\star=1/2$, $\Delta_B^{\mathrm{up}}=1/4$, and every
lower-level gap is zero, so any regret is caused by the option choice alone.
Initialise in the order $(G,1),(G,2),(B,1),(B,2)$ without applying the
termination test, so the current option after step four is $B$. Let $E$ be the
event that both initialisation draws from $G$ return $0$, which has probability
$(1/4)^2=1/16$.

On $E$ both sample means of $G$ are $0$ while both of $B$ are $1/2$. We show by
induction that the algorithm executes $B$ forever. Assume the current option
before step $t\ge5$ is $B$ and $G$ has received no further samples. If $J_t=B$
then $Q_t(s,C_t)=1/2=V_t(s)$, and the convention that ties terminate gives
$\beta_t=1$, after which the reselection returns $B$. If $J_t=G$ then
$Q_t(s,C_t)=1/2>0=Q_t(s,G)=V_t(s)$, so $\beta_t=0$ and the step continues with
$B$. Either way $B$ executes and the hypothesis is preserved. On $E$ the bad
option therefore runs on all $T-2$ steps from step three onwards, each at
pseudo-regret $1/4$, so
$\mathcal R_T^{\mathrm{gi}}\ge\frac1{16}\cdot\frac14(T-2)=(T-2)/64$. Since
single-step gaps are at most one, $\mathcal R_T^{\mathrm{gi}}\le T$ always, and
the worst case over the class is $\Theta(T)$.

\paragraph{Logarithmic upper bound on the same instance.}
That instance has two optimal leaves at $3/4$ and two suboptimal leaves at $1/2$,
each with total gap $1/4$. Theorem~\ref{thm:app-inst} gives
$\mathcal R_T^{\mathrm{fix}}\le2(8\ln T/(1/4)+C_0/4)=64\ln T+C_0/2$. Hence
$\mathcal R_T^{\mathrm{fix}}/\mathcal R_T^{\mathrm{gi}}\to0$. The class-level
statement combines this with Corollary~\ref{cor:app-df}.

\paragraph{Episode resets do not repair it.}
The continuing counterexample is broken by an unconditional reset of the option
at every episode boundary, but resets do not restore a sublinear rate in general.
Take a two-step episodic MDP. At the first step, state $s_0$ admits only the bad
option $B$; its single action returns $0$ and moves deterministically to $s_1$, so
the current option on arrival at $s_1$ is always $B$. At $s_1$ both options are
available with two actions each; both actions of $B$ return exactly $1/4$, both
actions of $G$ return $\operatorname{Ber}(3/4)$, and the episode ends. The optimal
episode value is $3/4$ and choosing $B$ at $s_1$ costs $1/2$. Initialise the four
leaves at $s_1$ over the first four episodes without applying the test, which
already costs $1$, and let $E$ be as before. On $E$ the critic values at $s_1$
are $0$ for both actions of $G$ and $1/4$ for both actions of $B$; since the
fixed prefix rebuilds $C=B$ at $s_1$ in every episode, the argument above applies
unchanged and $B$ executes at $s_1$ in all remaining $N-4$ episodes. Hence
$\mathbb E\operatorname{Reg}_N^{\mathrm{gi}}\ge1+(N-4)/32=1+(T-8)/64$ with
$T=2N$, while $\beta\equiv1$ at $s_1$ is UCB1 over four leaves with gaps $1/2$
and satisfies $\mathbb E\operatorname{Reg}_N^{\beta\equiv1}\le32\ln N+C_0$.
Removing this family of counterexamples needs a condition stronger than resetting
the option, for instance a guarantee that every available state--option pair is
actually executed and not merely proposed.

\subsection{No pointwise dominance}
\label{app:reverse}

Theorem~\ref{thm:separation} is a worst-case separation, and
the closing paragraph of Section~\ref{sec:theory} records that it cannot be
upgraded. Consider a
deterministic instance with $\mu_{G,1}=\mu_{G,2}=1$ and
$\mu_{B,1}=\mu_{B,2}=0$, initialised in the order $(B,1),(B,2),(G,1),(G,2)$, so
the current option after initialisation is $G$ with $\widehat q_G=1$ and
$\widehat q_B=0$. Under the raw-value indices, a proposal of $G$ gives
$Q_t(s,C_t)=V_t(s)$ and terminates into $G$ again, and a proposal of $B$ gives
$Q_t(s,C_t)=1>0=V_t(s)$ and continues with $G$; so $\mathsf A_{\mathrm{gi}}$
executes $G$ for ever and $\mathcal R_T^{\mathrm{gi}}=2$ for all $T\ge4$. Under
$\beta\equiv1$ the leaf indices force a return to $B$: at $t=11$,
$B_{B}(11)=\sqrt{2\ln 11}>1+\sqrt{2\ln 11/4}=B_G(11)$, so
$\mathcal R_{11}^{\mathrm{fix}}=3>2=\mathcal R_{11}^{\mathrm{gi}}$. The
excursions recur, because if $\beta\equiv1$ pulled $B$ only finitely often its
index would grow like $\sqrt{2\ln t/m}\to\infty$ while $B_G(t)\to1$. The veto in
\eqref{eq:hard} therefore has two faces: it locks in an error when the current
option is suboptimal and the unexplored option is not, and it saves exploration
cost when the current option is already optimal.

\subsection{Relation to the interruption theorem}
\label{app:interruption}

\citet{sutton1999between} show that interrupting an option whenever continuing is
worth less than reselecting under a \emph{fixed} Markov policy $\mu$ improves the
value everywhere and strictly somewhere. Two hypotheses of that theorem fail for
the online process analysed here. The quantities $Q_t,V_t$ are finite-sample
estimates and not exact values of a fixed policy, and the upper policy changes
with the counts, so there is no fixed $\mu$ to improve upon. In the
counterexample, $G$ is genuinely worth $3/4$, an unlucky initialisation makes its
estimate $0$, and the termination rule reads that estimation error as permanent
evidence for continuing with $B$ while simultaneously preventing the samples that
would correct it. The classical theorem concerns policy improvement once values
are known; it makes no claim about this estimation-visitation feedback loop.

\subsection{Episodic tabular RL}
\label{app:rl}

Let $\mathcal S$ be a finite state set with $S=|\mathcal S|$, let $H$ be the
episode length, $N$ the number of episodes and $T=NH$. Write
$\mathcal U(s)=\{(o,a):o\in\mathcal O(s),a\in\mathcal A_o(s)\}$ for the augmented
actions at $s$ and $D=\max_s|\mathcal U(s)|$. A Bellman-consistent optimistic
algorithm maintains $\overline Q_{k,h}(s,o,a)$ from the optimistic Bellman
recursion of a single augmented MDP, and selects
$a_{k,h}^+(s,o)\in\arg\max_a\overline Q_{k,h}(s,o,a)$ below and
$o_{k,h}^+(s)\in\arg\max_o\overline Q_{k,h}(s,o,a_{k,h}^+(s,o))$ above.

\begin{theorem}[Augmented isomorphism]
\label{thm:app-iso}
Under $\beta\equiv1$ the executed pair satisfies
$u_{k,h}\in\arg\max_{u\in\mathcal U(s_{k,h})}\overline Q_{k,h}(s_{k,h},u)$, and
the hierarchical agent produces exactly the same states, actions, rewards and
successor states as the same optimistic algorithm run on
$\mathcal M^+=(\mathcal S,\mathcal U,P,r,H)$.
\end{theorem}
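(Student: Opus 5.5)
The argument is an induction on time steps, in the same style as the proof of Proposition~\ref{prop:vacuous}, combined with the stepwise identity of Theorem~\ref{thm:app-equiv} transposed to the episodic setting.

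\textbf{Step 1: the executed pair is greedy in the augmented $Q$.} Fix episode $k$ and step $h$, with $s=s_{k,h}$. Under $\beta\equiv1$ the agent always reselects, so it executes $u_{k,h}=(o^+,a^+)$ with $o^+=o_{k,h}^+(s)$ and $a^+=a_{k,h}^+(s,o^+)$. The lower choice gives $\overline Q_{k,h}(s,o,a_{k,h}^+(s,o))=\max_a\overline Q_{k,h}(s,o,a)$ for each $o$. The upper choice is an $\arg\max$ over $o$ of this quantity. Therefore
\[
\overline Q_{k,h}(s,u_{k,h})=\max_o\max_a\overline Q_{k,h}(s,o,a)=\max_{u\in\mathcal U(s)}\overline Q_{k,h}(s,u).
\]
This is exactly the chain used in Theorem~\ref{thm:app-equiv}. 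The only input needed is that the option set $\mathcal O(s)$ and the action sets $\mathcal A_o(s)$ enumerate $\mathcal U(s)$ exactly, which holds by the definition of $\mathcal U(s)$.

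\textbf{Step 2: tie-breaking and a common probability space.} I will fix the flat algorithm's tie-breaking rule on $\mathcal U(s)$ to be the lexicographic composition of the two-level rules: first break ties among options by the upper rule, then among actions by the lower rule. With this convention the flat agent selects the same $u$ as the hierarchical agent, not merely some other maximiser.

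The two agents run on the same environment $(P,r)$ with coupled randomness. Concretely, the successor state and reward at a visit to $(s,u)$ are drawn from the same source in both runs, for example a shared table of i.i.d.\ draws indexed by visit. Since $P$ and $r$ depend on $u=(o,a)$ only through the primitive action $a$ taken at $s$, the two runs face the same transition law.

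\textbf{Step 3: induction over the history.} I induct on the history $(k,h)$ in lexicographic order. If the two histories agree up to $(k,h)$, then all counts, empirical transition estimates and bonuses agree. This is because $\overline Q_{k,h}$ is defined by the optimistic Bellman recursion of a single augmented MDP, so it is a deterministic function of the augmented history. Hence $\overline Q_{k,h}$ is the same in both runs. By Steps~1 and~2 the executed $u$ agrees, and under the coupling the reward and successor state agree, so the histories still agree after $(k,h)$.

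At an episode boundary both runs reset to the same initial state. There is no carried option to worry about, because $\beta\equiv1$ discards the current option at every step, including the first step of each episode.

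\textbf{Main obstacle.} I expect the delicate part to be making precise that ``the same optimistic algorithm'' computes the same $\overline Q$ from the same data. In particular, the hierarchical agent must maintain its statistics at the level of pairs $(s,o,a)$, and not pool across options that share a primitive action. If it did pool, the two agents would have different counts and bonuses, and the isomorphism would fail.

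I would resolve this by taking as part of the hypothesis, which the phrase ``single augmented MDP'' supports, that counts are indexed by $(s,u)$. With that, the recursion in both runs is literally the same function of the same history, and the induction in Step~3 goes through.
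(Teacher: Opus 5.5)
Your proposal is correct and follows the paper's proof: nested maximisation is joint maximisation, $\beta\equiv1$ leaves no carried-over option to constrain the executed pair, and induction over the history propagates agreement of counts and optimistic values. Your explicit tie-breaking convention and your coupling of the randomness make precise what the paper leaves implicit; the remark that $P$ and $r$ depend on $u$ only through the primitive action is not needed, since $\mathcal M^+$ is defined with $P(\cdot\,|\,s,u)$ and $r(s,u)$ directly.
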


\begin{proof}
Nested maximisation is joint maximisation:
$\overline Q(s,o^+,a^+(s,o^+))=\max_o\max_a\overline Q(s,o,a)
=\max_{u}\overline Q(s,u)$. With $\beta\equiv1$ the option is resampled at every
step, so no constraint carries over from the previous step and the pair $u$ is
free. Executing $u$ yields reward $r(s,u)$ and successor $P(\cdot\,|\,s,u)$ by
definition of $\mathcal M^+$, so the observation histories agree, and induction
propagates the agreement to all counts and optimistic values.
\end{proof}

Theorem~\ref{thm:ucbvi} follows by instantiating the optimistic algorithm as
UCBVI-BF and quoting its guarantee.

\begin{proof}[Proof of Theorem~\ref{thm:ucbvi}]
Assume known deterministic rewards in $[0,1]$, unknown transitions, a common
augmented action set of size $D$, and failure probability $\delta$. By
Theorem~\ref{thm:app-iso} the hierarchical agent and UCBVI-BF on $\mathcal M^+$
have identical episode regret. Theorem 2 of \citet{azar2017minimax} gives, with
probability at least $1-\delta$ and $L=\ln(5HSDT/\delta)$,
$\operatorname{Reg}_N\le30HL\sqrt{SDN}+2500H^2S^2DL^2+4H^{3/2}\sqrt{NL}$.
Substituting $N=T/H$ turns the first term into $30L\sqrt{HSDT}$ and the third
into $4H\sqrt{TL}$.
\end{proof}

Two caveats belong with this bound. If rewards and transitions depend only on the
primitive action, distinct pairs $(o,a)$ duplicate one another, and taking
$D\le|\mathcal O||\mathcal A|$ is correct but loose unless the algorithm pools
statistics across options. And UCBVI is model-based: a bonus at each level is not
enough to invoke the theorem, since the optimistic values must satisfy the
corresponding Bellman recursion and confidence event.

\subsection{Summary}
\label{app:theory-summary}

Consider any tabular scheme whose two-level selection is equivalent to
maximisation over augmented actions. If the behaviour selector and the
termination test share one value function and the state value is the maximum over
option values, then $\beta\equiv1$ holds pathwise, the two algorithms coincide,
and both inherit whichever guarantee the underlying optimistic algorithm carries.
If the selector uses optimistic values while the termination test uses raw
values, no uniform sublinear guarantee survives, even at $H=2$ with per-episode
option resets. The two rules are not comparable instance by instance. What can be
asserted, and what Section~\ref{sec:theory} asserts, is that a termination rule
trained to maximise return is redundant in the consistent case and unguaranteed
in the standard case.

\section{Additional results}
\label{app:extra}

\begin{table}[h]
\centering
\caption{\textbf{Learned termination and forced per-step reselection are
indistinguishable under exact frozen evaluation.} Shortest-path FourRooms,
episode-1000 checkpoints, expected capped transitions and success probability
under horizon-1000 exact dynamic programming with $\epslo=0$, averaged over the
five kernels within seed. Differences are paired over $350$ matched seeds with
$95\%$ bootstrap intervals; negative favours learned termination. At $K{=}1$
termination cannot change which option runs, so that row measures the noise floor
of the seed matching.}
\label{tab:termination-frozen}
\small
\setlength{\tabcolsep}{5pt}
\begin{tabular}{ccccccc}
\toprule
& \multicolumn{2}{c}{Expected steps} & \multicolumn{2}{c}{Success probability} & \multicolumn{2}{c}{Paired difference in steps} \\
\cmidrule(lr){2-3}\cmidrule(lr){4-5}\cmidrule(lr){6-7}
$K$ & learned OC & $\beta\equiv1$ & learned OC & $\beta\equiv1$ & mean & 95\% CI \\
\midrule
1 & 962.49 & 963.21 & 0.038 & 0.037 & $-0.71$ & $[-2.62,\ 1.19]$ \\
2 & 821.44 & 822.31 & 0.195 & 0.194 & $-0.87$ & $[-7.93,\ 6.20]$ \\
4 & 377.20 & 384.37 & 0.687 & 0.680 & $-7.17$ & $[-21.34,\ 6.95]$ \\
8 & 117.35 & 122.46 & 0.964 & 0.959 & $-5.11$ & $[-12.77,\ 2.57]$ \\
\bottomrule
\end{tabular}
\end{table}

\begin{figure}[h]
\centering
\includegraphics[width=\textwidth]{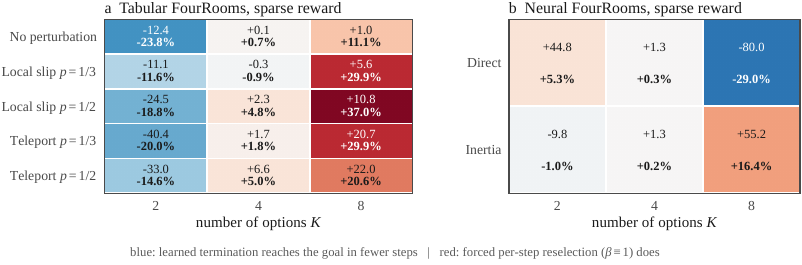}
\caption{\textbf{Table~\ref{tab:termination} as a heat map.} Last-100-episode
step count of learned termination minus that of a matched $\beta\equiv1$ agent,
in absolute steps (top of each cell) and relative to the control (bottom).
Blue favours learned termination and red favours $\beta\equiv1$. The tabular
grid reverses sign completely between $K{=}2$ and $K{=}8$; the neural grid has no
consistent direction.}
\label{fig:termination}
\end{figure}

\begin{figure}[h]
\centering
\includegraphics[width=\textwidth]{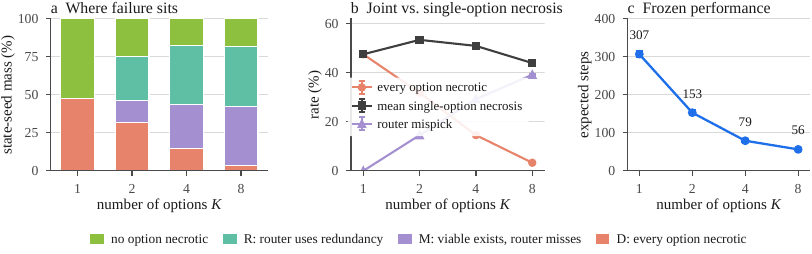}
\caption{\textbf{The sparse-reward replication of Figure~\ref{fig:coverage}.}
Episode-1000 control checkpoints of the matched-fork grid, five-kernel average
over $350$ matched seeds, with necrosis defined by the $1\%$ optimal-action mass
criterion of Section~\ref{sec:diagnostics}. Joint necrosis falls
$47.5\rightarrow31.8\rightarrow14.6\rightarrow3.3$ percent while the mean
single-option rate stays near $50\%$, and the router's miss rate rises to
$39.2\%$; expected steps fall $306.7\rightarrow152.9\rightarrow79.1\rightarrow
56.4$. The effective independent option count is $\kappa=1.82$, $2.85$ and
$4.14$ at $K=2,4,8$.}
\label{fig:sparse-coverage}
\end{figure}

\begin{figure}[h]
\centering
\includegraphics[width=\textwidth]{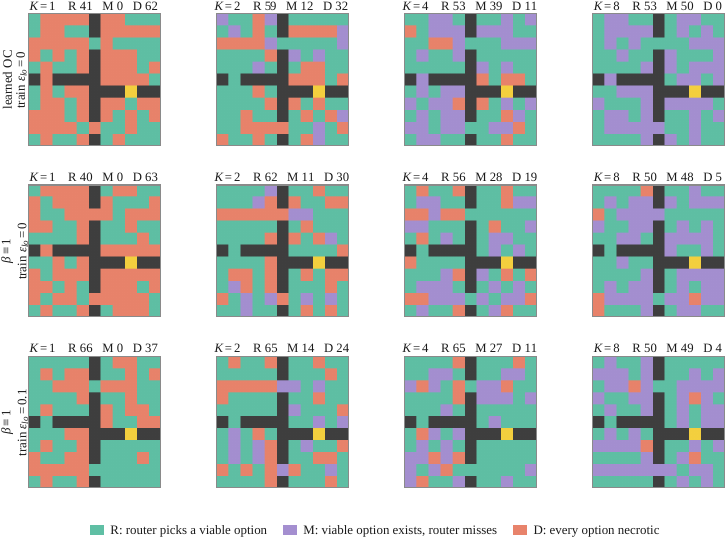}
\caption{\textbf{Where coverage appears, and where the router loses it.} Every
non-goal cell of constant-cost FourRooms under teleport $p=\tfrac12$, at the
episode-1000 checkpoint of the spatial-medoid seed, chosen by minimum Hamming
distance across all twelve panels without reference to performance. Dark grey is
wall and yellow is the goal. The middle row is a $\beta\equiv1$ agent trained
without lower-level exploration; its maps are visually and numerically close to
the learned-termination agent above it at every option count, which is the
coverage-level statement of Section~\ref{sec:termination}.}
\label{fig:statemaps-full}
\end{figure}

\end{document}